\def\eqref#1{equation~\ref{#1}}
\def\1{\bm{1}}
\def\vmu{{\bm{\mu}}}
\def\va{{\bm{a}}}
\def\vc{{\bm{c}}}
\def\ve{{\bm{e}}}
\def\vm{{\bm{m}}}
\def\vo{{\bm{o}}}
\def\vs{{\bm{s}}}
\def\vw{{\bm{w}}}
\def\vz{{\bm{z}}}
\def\mI{{\bm{I}}}
\def\mSigma{{\bm{\Sigma}}}
\DeclareMathAlphabet{\mathsfit}{\encodingdefault}{\sfdefault}{m}{sl}
\SetMathAlphabet{\mathsfit}{bold}{\encodingdefault}{\sfdefault}{bx}{n}
\def\sA{{\mathbb{A}}}
\def\sI{{\mathbb{I}}}
\def\sN{{\mathbb{N}}}
\def\sO{{\mathbb{O}}}
\def\sR{{\mathbb{R}}}
\def\sS{{\mathbb{S}}}
\def\sV{{\mathbb{V}}}
\newtheorem{assumption}{Assumption}
\newtheorem{definition}{Definition}
\newtheorem{lemma}{Lemma}
\newtheorem{proposition}{Proposition}
\title{Modelling Hierarchical Structure between Dialogue Policy and Natural Language Generator with Option Framework for Task-oriented Dialogue System}
\author{Jianhong Wang\thanks{Imperial College London. $^\mathit{\dagger}$Laiye Network Technology Co. Ltd.. $^\mathit{\clubsuit}$KAIST. $^\mathit{\spadesuit}$University of Bath. Correspondence to Yunjie Gu: \url{yg934@bath.ac.uk}.} \ , Yuan Zhang$^\mathit{\dagger}$, Tae-Kyun Kim$^\mathit{*}$$^\mathit{\clubsuit}$, Yunjie Gu$^\mathit{*}$$^\mathit{\spadesuit}$
}
\begin{document}

\maketitle

\begin{abstract}
    Designing task-oriented dialogue systems is a challenging research topic, since it needs not only to generate utterances fulfilling user requests but also to guarantee the comprehensibility. Many previous works trained end-to-end (E2E) models with supervised learning (SL), however, the bias in annotated system utterances remains as a bottleneck. Reinforcement learning (RL) deals with the problem through using non-differentiable evaluation metrics (e.g., the success rate) as rewards. Nonetheless, existing works with RL showed that the comprehensibility of generated system utterances could be corrupted when improving the performance on fulfilling user requests. In our work, we (1) propose modelling the hierarchical structure between dialogue policy and natural language generator (NLG) with the option framework, called HDNO, where the latent dialogue act is applied to avoid designing specific dialogue act representations; (2) train HDNO via hierarchical reinforcement learning (HRL), as well as suggest the asynchronous updates between dialogue policy and NLG during training to theoretically guarantee their convergence to a local maximizer; and (3) propose using a discriminator modelled with language models as an additional reward to further improve the comprehensibility. We test HDNO on MultiWoz 2.0 and MultiWoz 2.1, the datasets on multi-domain dialogues, in comparison with word-level E2E model trained with RL, LaRL and HDSA, showing improvements on the performance evaluated by automatic evaluation metrics and human evaluation. Finally, we demonstrate the semantic meanings of latent dialogue acts to show the explanability for HDNO.
\end{abstract}

\section{Introduction}
    \label{sec:intro}
        Designing a task-oriented dialogue system is a popular and challenging research topic in the recent decades. In contrast to the open-domain dialogue system \citep{ritter2011data}, it aims to help people complete real-life tasks through dialogues without human service (e.g., booking tickets) \citep{young2006using}. In a task-oriented dialogue task, each dialogue is defined with a goal which includes user requests (i.e., represented as a set of key words known as slot values). The conventional task-oriented dialogue system is comprised of 4 modules (see Appendix \ref{subsec:task_oriented_dialogue}), each of which used to be implemented with handcrafted rules \citep{chen2017survey}. Given user utterances, it gives responses in turn to fulfill the requests via mentioning corresponding slot values.
        
        Recently, several works focused on training a task-oriented dialogue system in end-to-end fashion (E2E) \citep{bordes2016learning,Rojas-BarahonaG17} for generalizing dialogues outside corpora. To train a E2E model via supervised learning (SL), generated system utterances are forced to fit the oracle responses collected from human-to-human conversations \citep{budzianowski2017sub}. The oracle responses contain faults by humans thus being inaccurate, which leads to biased SL. On the other hand, the goal is absolutely clear, though the criterion of success rate that evaluates the goal completion is non-differentiable and cannot be used as a loss for SL.
        
        To tackle this problem, reinforcement learning (RL) is applied to train a task-oriented dialogue system \citep{williams2007partially,ZhaoE16,GaoWPLL18,zhao2019rethinking}. Specifically, some works merely optimized dialogue policy while other modules, e.g., the natural language generator (NLG), were fixed \citep{GaoWPLL18,zhao2019rethinking,SuLGLC18}. In contrast, other works extended the dialogue policy to NLG and applied RL on the entire E2E dialogue system, regarding each generated word in a response as an action \citep{ZhaoE16}. Although previous works enhanced the performance on fulfilling user requests, the comprehensibility of generated system utterances are corrupted \citep{GaoWPLL18,zhao2019rethinking,TangLGW0J18}. The possible reasons are: (1) solely optimizing dialogue policy could easily cause the biased improvement on fulfilling user requests, ignoring the comprehensibility of generated utterances (see Section \ref{subsec:task_oriented_dialogue}); (2) the state space and action space (represented as a vocabulary) in E2E fashion is so huge that learning to generate comprehensible utterances becomes difficult \citep{LewisYDPB17}; and (3) dialogue system in E2E fashion may lack explanation during the procedure of decision.
        
        In our work, we propose to model the hierarchical structure between dialogue policy and NLG with the option framework, i.e., a hierarchical reinforcement learning (HRL) framework \citep{sutton1999between} called HDNO (see Section \ref{subsec:model}) so that the high-level temporal abstraction can provide the ability of explanation during the procedure of decision. Specifically, dialogue policy works as a high-level policy over dialogue acts (i.e. options) and NLG works as a low-level policy over generated words (i.e. primitive actions). Therefore, these two modules are decoupled during optimization with the smaller state space for NLG and the smaller action space for dialogue policy (see Appendix \ref{sec:extra_discussion}). To reduce the efforts on designing dialogue act representations, we represent a dialogue act as latent factors. During training, we suggest the asynchronous updates between dialogue policy and NLG to theoretically guarantee their convergence to a local maximizer (see Section \ref{subsec:learn_algo}). Finally, we propose using a discriminator modelled with language models \citep{yang2018unsupervised} as an additional reward to further improve the comprehensibility (see Section \ref{subsec:reward_form}).
        
        We evaluate HDNO on two datasets with dialogues in multiple domains: MultiWOZ 2.0 \citep{BudzianowskiWTC18} and MultiWOZ 2.1 \citep{eric2019multiwoz}, compared with word-level E2E \citep{BudzianowskiWTC18} trained with RL, LaRL \citep{zhao2019rethinking} and HDSA \citep{ChenCQYW19}. The experiments show that HDNO works best in the total performance evaluated with automatic metrics (see Section \ref{subsec:comparison_baselines}) and the human evaluation (see Section \ref{subsec:human evaluation}). Furthermore, we study the latent dialogue acts and show the ability of explanation for HDNO (see Section \ref{subsec:study_dialogue_act}).
        
    \section{Related Work}
    \label{sec:related}
        Firstly, we go through the previous works on studying the dialogue act representation for task-oriented dialogue systems. Some previous works optimized dialogue policy with reinforcement learning (RL), which made decision via selecting from handcrafted dialogue acts represented as ontology \citep{GaoWPLL18,young2007hidden,walker2000application,HeCBL18}. Such a representation method is easily understood by human beings, while the dialogue act space becomes limited in representation. To deal with this problem, some researchers investigated training dialogue acts via fitting oracle dialogue acts represented in sequence \citep{ChenCQYW19,zhang2019task,lei2018sequicity}. This representation method generalized dialogue acts, however, designing a good representation is effort demanding. To handle this problem, learning a latent representation of dialogue act was attempted \citep{zhao2019rethinking,YaratsL18}. In our work, similar to \citep{zhao2019rethinking} we learn latent dialogue acts without any labels of dialogue acts. By this view, our work can be regarded as an extension of LaRL \citep{zhao2019rethinking} on learning strategy.
        
        Then, we review the previous works modelling a dialogue system with a hierarchical structure. In the field of task-oriented dialogue systems, many works lay on modelling dialogue acts or the state space with a hierarchical structure to tackle the decision problem for dialogues with multi-domain tasks \citep{cuayahuitl2009hierarchical,PengLLGCLW17,ChenCQYW19,tang2018subgoal,BudzianowskiUSM17}. Distinguished from these works, our work views the relationship between dialogue policy and natural language generator (NLG) as a natural hierarchical structure and models it with the option framework \citep{sutton1999between}. In the field of open-domain dialogue system, a similar hierarchical structure was proposed \citep{serban2017hierarchical,saleh2019hierarchical} but with a different motivation from ours. In this sense, these two fields are possible to be unified.
        
        Finally, among the works training with hierarchical reinforcement learning (HRL), some of them set up an extrinsic reward for high-level policy and an intrinsic reward for low-level policy respectively to encourage the convergence \citep{PengLLGCLW17,BudzianowskiUSM17}. In our work, we train both high-level policy and low-level policy with identical rewards to guarantee the consistency between two policies \citep{sutton1999between}. On the other hand, in the field of open-domain dialogue system, \citet{saleh2019hierarchical} represented the joint generated utterances over a turn as a low-level action such that both high-level policy and low-level policy were in identical time scales. Besides, its low-level policy gradients flew through high-level policy during training, which degraded hierarchical policies to an E2E policy with a word-level action space. In our work, (1) dialogue policy and NLG are decoupled during optimization and no gradients are allowed to flow between them; (2) these two policies are asynchronously updated to theoretically guarantee the convergence to a local maximizer; and (3) each generated word is regarded as a low-level action.
        
    \section{Background}
    \label{sec:background}
        \subsection{Task-oriented Dialogue System}
        \label{subsec:task_oriented_dialogue}
            \textbf{Brief Introduction: }A task-oriented dialogue system aims to help fulfill a user's task through conversation in turns. In general, each dialogue is modelled with an ontology called goal which includes inform slots and request slots. The traditional modular dialogue system is constituted of natural language understanding (NLU), dialogue state tracker (DST), dialogue policy and natural language generator (NLG). For a dialogue system, it needs to infer inform slots from user utterances and transform them to a dialogue state, which is completed by NLU and DST \citep{chen2017survey}. In this work, we focus on optimizing dialogue policy and NLG, leveraging oracle dialogue states and database search results to produce dialogue acts and then responses (that should include as many request slots as possible) in turns. For optimizing dialogue policy, it is modelled with Markov decision process (MDP) \citep{williams2007partially}.
            
            \textbf{Existing Challenges: }We identify the main challenges of task-oriented dialogue systems: (1) A dialogue with a single domain (i.e. completing one task in a dialogue) has been broadly studied, however, handling a dialogue with multiple domains is more challenging and needs more studies on it \citep{BudzianowskiWTC18}; (2) If ignoring the syntactic structure of generated system utterances (i.e. losing comprehensibility), the mission of task-oriented dialogues will be simplified to generating corresponding labels (i.e., slots) for user utterances. Several existing algorithms already reached high scores on request slots acquisition but low scores on the comprehensibility of generated system utterances \citep{zhao2019rethinking,mehri2019structured}, so the simplified task has been well-addressed. Reversely, if only focusing on the comprehensibility, the score on request slots acquisition could be drastically affected \citep{ChenCQYW19,hosseini2020simple}. In this work, we investigate the trade-off between the comprehensibility and request slots acquisition; (3) Designing and annotating a dialogue act structure is effort demanding \citep{BudzianowskiWTC18}. Therefore, learning a meaningful latent dialogue act becomes a new challenge \citep{zhao2019rethinking}.
            
        \subsection{Hierarchical Reinforcement Learning with Option Framework}
        \label{subsec:hierarchical reinforcement learning with option framework}
            Hierarchical reinforcement learning (HRL) is a variant of reinforcement learning (RL) which extends the decision problem to coarser grains with multiple hierarchies \citep{sutton1999between,dayan1993feudal,parr1998reinforcement,dietterich1998maxq}. Amongst several HRL methods, the option framework \citep{sutton1999between} is a temporal abstraction for RL, where each option (i.e. a high-level action) lasts for a number of steps through primitive actions (i.e. low-level actions). From the view of decision problems, an MDP defined with a fixed set of options naturally forms a semi-MDP (SMDP). Formally, an option $\vo = \langle \sI, \beta, \pi \rangle$ is composed of three components: an initiation set $\sI \subseteq \sS$ (where $\sS$ is a state space), a termination function $\mathit{\beta}(\vs_{t}) \mapsto [0, 1]$ and an intra-option policy $\pi(\va_{t}|\vs_{t}) \mapsto [0, 1]$. The reward over primitive actions is defined as $\mathit{r}_{t} \in \mathbb{R}$, identical to vanilla RL. An option $\vo_{t}$ is available at $\vs_{t} \in \sI$. At each $\vs_{t}$, $\mathit{\pi}$ is used to decide a low-level action $\va_{t}$ until the option is stochastically terminated by $\mathit{\beta}$. Similar to RL on flat actions, the probability transition function over options is defined as $\mathit{p}(\vs'|\vs_{t}, \vo_{t}) = \sum_{k=1}^{\infty} p(\vs', k) \ \gamma^{k}$, where $\mathit{p(\vs', k)}$ is the probability of an option terminating in k steps and $\mathit{\gamma} \in (0, 1)$. The policy over options is defined as $\mu(\vo_{t}|\vs_{t}) \mapsto [0, 1]$ and the reward over options lasting for $\mathit{k}$ steps is defined as $g(\vs_{t}, \vo_{t}, \vs_{t+k}) = \mathbb{E}_{\pi}[ \ \sum_{i=t}^{t+k} \gamma^{i-t} \ r_{i} \ ]$ (abbreviated as $\mathit{g}_{t}$ for simplicity). Given a set of options $\mathit{o} \in \sO$, the optimization problem over options is defined as $\mathit{\max_{o}} \mathbb{E}_{o}[ \ \sum_{k \in M} \gamma^{k-t} g_{k} \ ]$, where $\vm=(t,t',...)$ is a sequence containing the time step of each event \footnote{An event is defined as a policy over option calling an intra-option policy at some state.} that will be experienced from some time step $\mathit{t}$ to future. To automatically fit complicated circumstances, we may also need to discover options dynamically during learning. Intra-option policy gradient theorem and termination policy gradient theorem \citep{bacon2017option} provide the basis to apply a policy gradient method for option discovery.
    
    \section{Modelling Hierarchical Structure between Dialogue Policy and NLG with Option Framework}
    \label{sec:hierarchical}
        \begin{figure}[ht!]
          \centering
          \includegraphics[width=0.85\linewidth]{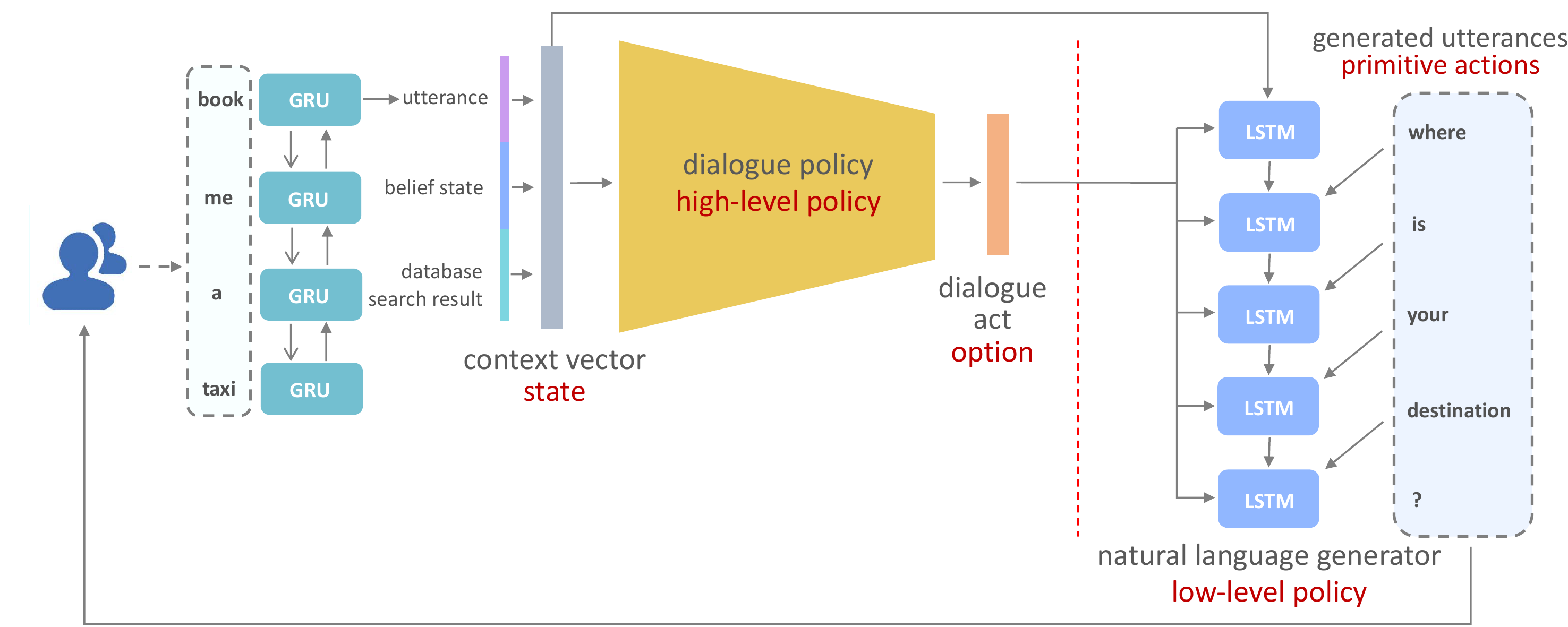}
          \caption{This diagram demonstrates the overall architecture for modelling the hierarchical structure between dialogue policy (i.e. high-level policy) and NLG (i.e. low-level policy) as an option framework. The text in gray represents the concepts for a traditional task-oriented dialogue system whereas the text in red matches the concepts for the option framework.}
          \label{fig:hierarchical_structure}
        \end{figure}
        
        \subsection{Model Formulation}
        \label{subsec:model}
            In this section, we present a view on modelling the \textbf{H}ierarchical structure between \textbf{D}ialogue policy and \textbf{N}atural language generator (NLG) with the \textbf{O}ption framework \citep{sutton1999between}, called HDNO. Specifically, a dialogue act in HDNO is seen as an option whereas each generated word from NLG is a primitive action. Accordingly, dialogue policy and NLG become the policy over option (i.e. high-level policy) and the intra-option policy (i.e. low-level policy) respectively. Distinguished from a conventional modular system, we additionally give a context to NLG to satisfy the conditions of the option framework. Moreover, since the primitive action space (i.e. a vocabulary) comprises a termination symbol, NLG can take over the responsibility of termination. For this reason, termination policy in the original option framework is absorbed into the intra-option policy. The formal definition of HDNO is shown in Definition \ref{def:option_framework}.
            \begin{definition}
                A dialogue policy (i.e. a policy over option) is defined as $\mathit{\phi}: \sS \times \sO \rightarrow [0, 1]$, where $\sS$ is a set of contexts (i.e. utterances, dialogue states and database search results); $\sO$ is a set of dialogue acts (i.e. options). A dialogue act is defined as $\vo=\langle \sI_{\vo}, \pi_{\vo} \rangle$, where $\sI_{\vo} \subseteq \sS$ is a set of corresponding contexts for a generated word and $\mathit{\pi}_{\vo}: \sI_{\vo} \times \sV \rightarrow [0, 1]$ is natural language generator (NLG) (i.e. an intra-option policy). $\sV$ is a vocabulary (including a termination symbol).
                \label{def:option_framework}
            \end{definition}
            
            According to MDP theorem over option \citep{sutton1999between} and intra-option policy gradient theorem \citep{bacon2017option}, we can naturally apply REINFORCE \citep{williams1992simple} to learn both $\mathit{\phi}$ and $\mathit{\pi}$. Therefore, following Section \ref{sec:background} and Definition \ref{def:option_framework}, we can write policy gradients in our case such that $\nabla \mathit{J}(\phi) = \mathbb{E}_{\phi} \mathlarger{[} \ \sum_{k \in M} \gamma^{k-t} g_{k} \nabla \ln \phi(\vo_{t}|\vs_{t}) \ \mathlarger{]}$ and $\nabla \mathit{J}(\pi_{\vo_{t}}) = \mathbb{E}_{\pi_{\vo_{t}}} \mathlarger{[} \ \sum_{i=t}^{T} \gamma^{i-t} r_{i} \nabla \ln \pi_{\vo_{t}}(\vw_{t}|\vs_{t}) \ \mathlarger{]}$, where we assume that the length of all generated system utterances is $\mathit{T}$ and $\vm=(t, t', ...)$ is a sequence containing the time steps of an event that appear in future for an arbitrary $\vo_{t} = \langle \sI_{\vo_{t}}, \pi_{\vo_{t}} \rangle \in \sO$.
        
        \subsection{Asynchronously Updating Dialogue Policy and NLG during Learning}
        \label{subsec:learn_algo}
            As mentioned in Section \ref{subsec:model}, dialogue policy and NLG are written as $\phi(\vo|\vs)$ and $\pi_{\vo}(\vw|\vs)$ respectively. However, since $\vo=\langle \sI_{\vo}, \pi_{\vo} \rangle$, we can assume that when dialogue policy made a decision, it has to consider the current performance on the overall set of low-level policies for NLG, denoted as $\pi=\{ \pi_{\vo} \}_{\vo \in \sO}$. For the reason, we temporarily rewrite dialogue policy to $\phi(\vo|\pi, \vs)$ for convenience.  The aim is finding the best policies (i.e. maximizers) so that the value can be maximized such that $\max_{\phi, \pi} v(\vs| \phi(\vo|\pi, \vs)), \forall \vs \in \sS$. If updating these two policies synchronously, it will cause the composite state (i.e. $\langle \pi, \vs \rangle$) of $\phi(\vo|\pi, \vs)$ inconsistent before and after the update each time. Therefore, the value does not always monotonically improve during learning, which will affect the convergence of both policies (see Proposition \ref{prop:syn}). To address this problem, we suggest updating dialogue policy and NLG asynchronously during learning to theoretically guarantee the convergence of these policies to a local maximizer (see Proposition \ref{prop:asyn}). The proofs of these two propositions are left to appendices due to limited space.
            
            \begin{proposition}
                Following the model of Definition \ref{def:option_framework}, if $\phi(\vo|\mathit{\pi}, \vs)$ and $\pi$ are synchronously updated, the value does not always monotonically improve and the policies may never converge to a local maximizer.
                \label{prop:syn}
            \end{proposition}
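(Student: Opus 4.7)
The plan is to establish Proposition~\ref{prop:syn} by constructing a counterexample: one synchronous gradient step can strictly decrease the joint value, and iterating the update can produce a non-converging trajectory. Since the statement is negative (``does not always'', ``may never''), a single worked example settles both claims.

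First I would isolate the source of the pathology. Under Definition~\ref{def:option_framework} each option is the pair $\vo = \langle \sI_{\vo}, \pi_{\vo}\rangle$, so $\phi(\vo|\pi, \vs)$ implicitly conditions on $\pi$ through its action set. Writing the joint expected return as $J(\phi, \pi)$ and expanding around a simultaneous step $(\phi+\delta\phi,\,\pi+\delta\pi)$,
\[
J(\phi+\delta\phi,\pi+\delta\pi) - J(\phi,\pi) = \langle \nabla_\phi J,\delta\phi\rangle + \langle \nabla_\pi J,\delta\pi\rangle + \langle \delta\phi,\, \nabla^2_{\phi\pi} J\cdot \delta\pi\rangle + O(\|\delta\|^3).
\]
Each of the first two inner products is non-negative for a partial REINFORCE step with the other argument frozen, but the cross-curvature term is uncontrolled by either partial gradient and can be arbitrarily negative. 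This localises exactly where to push the counterexample.

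Next I would instantiate a minimal option MDP: one context $\vs$, two options $\vo_1,\vo_2$, vocabulary $\{w_1,w_2\}$, and reward $+1$ on the aligned pairs $(\vo_i,w_i)$ and $-1$ on the mismatched pairs, with $\phi$ and each $\pi_{\vo_i}$ parametrised by a single sigmoid. Starting from a near-uniform initialisation marginally biased toward $(\vo_1, w_1)$, I would compute the REINFORCE gradients in closed form from the expressions for $\nabla J(\phi)$ and $\nabla J(\pi_{\vo_t})$ given in Section~\ref{subsec:model}. Because both gradients are evaluated against the \emph{old} joint policy, the simultaneous shift of $\phi$ toward $\vo_1$ and of $\pi_{\vo_2}$ toward $w_1$ (a mismatch for $\vo_2$) makes the cross term dominate and flips the sign of the one-step change in $J$. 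Symmetry of the construction then gives a persistent two-cycle between suboptimal joint policies, which simultaneously witnesses the failure of monotonic improvement and the failure to converge to any local maximiser.

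The main obstacle is not the algebra but ensuring the non-monotonicity is a genuine cross-term effect rather than an artefact of an oversized step size or a contrived off-gradient direction. I would therefore keep the learning rate inside the regime where each partial update alone is provably improving, present the one-step calculation as an exact expression in the sigmoid parameters so the sign of the cross term is directly verifiable, and exhibit the two-cycle explicitly. Contrasting this with the asynchronous analysis used for Proposition~\ref{prop:asyn}, where alternating updates eliminate the cross term and recover monotonic improvement, then provides a clean conceptual takeaway for why asynchrony is necessary.
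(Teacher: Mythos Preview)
Your overall strategy is sound and considerably more rigorous than the paper's own argument. The paper does not construct any explicit instance: it merely notes that a synchronous step yields $\phi^{q+1}$ optimised against $\pi^{q}$ and $\pi^{q+1}$ optimised against $\phi^{q}$, so that while $v\big(\vs\,|\,\phi^{q+1}(\pi^{q})\big)\ge v\big(\vs\,|\,\phi^{q}(\pi^{q})\big)$ and $v\big(\vs\,|\,\phi^{q}(\pi^{q+1})\big)\ge v\big(\vs\,|\,\phi^{q}(\pi^{q})\big)$ each hold, nothing rules out $v\big(\vs\,|\,\phi^{q+1}(\pi^{q+1})\big) < v\big(\vs\,|\,\phi^{q}(\pi^{q})\big)$; it then stops, treating the bare possibility as the proof. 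Your Taylor expansion names the same mechanism (the uncontrolled mixed term $\langle \delta\phi,\nabla^{2}_{\phi\pi}J\,\delta\pi\rangle$) and then goes further by building a witnessing MDP and an explicit two-cycle, which turns a plausibility remark into an actual proof and makes the contrast with Proposition~\ref{prop:asyn} quantitative.

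That said, the specific dynamics you sketch do not deliver the mismatch you need. With reward $+1$ on $(\vo_i,w_i)$ and $-1$ otherwise, the REINFORCE gradient for $\pi_{\vo_2}$ points toward $w_2$, not toward $w_1$ as you write. Parametrising $p=\phi(\vo_1)$ and $q_i=\pi_{\vo_i}(w_i)$ gives $J=p(2q_1-1)+(1-p)(2q_2-1)$; the only negative cross-derivative is $\partial^{2}J/\partial p\,\partial q_2=-2$, and near the uniform point the two cross contributions almost cancel, so the first-order gains dominate and $J$ still increases. To make the cross term win at a step size for which each partial update is individually improving, you will need to break the symmetry---e.g.\ asymmetric rewards or unequal visitation weights so that $\delta q_2$ is substantially larger than $\delta q_1$---and then verify separately that the map returns you to (a reflection of) the starting configuration, since a single decrease does not by itself generate a persistent two-cycle.
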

            
            \begin{assumption}
                (1) Reward function is bounded. (2) With sufficient number of samples, the Monte Carlo estimation for value on any state is accurate enough.
                \label{assm}
            \end{assumption}
            
            \begin{proposition}
                Following the model of Definition \ref{def:option_framework} and Assumption \ref{assm}, if $\phi(\vo|\mathit{\pi}, \vs)$ and $\pi$ are asynchronously updated, the value can improve monotonically during learning and the policies can finally converge to a local maximizer.
                \label{prop:asyn}
            \end{proposition}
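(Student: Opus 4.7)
The plan is to view the asynchronous scheme as block coordinate ascent on the joint objective $V(\phi, \pi) := v(\vs_0 \mid \phi(\vo\mid\pi,\vs), \pi)$, the expected discounted return of the composite policy starting from the initial context distribution. Phase A freezes $\pi$ and takes a policy-gradient step on $\phi$ using the high-level estimator $\nabla J(\phi) = \mathbb{E}_{\phi}[\sum_{k\in M}\gamma^{k-t} g_k \nabla\ln\phi(\vo_t\mid\vs_t)]$ from Section \ref{subsec:model}; Phase B freezes $\phi$ and takes a policy-gradient step on each $\pi_{\vo}$ using $\nabla J(\pi_{\vo})$. Because only one block is updated per phase, the second argument of $\phi(\vo\mid\pi,\vs)$ stays fixed during its own update, which is precisely the pathology that breaks Proposition \ref{prop:syn}.

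The first step is per-phase monotone improvement. With $\pi$ held fixed, $V$ reduces to a function of $\phi$ alone, and the high-level REINFORCE gradient is an unbiased estimator of $\nabla_{\phi} V$. By Assumption \ref{assm}, rewards are bounded and the Monte-Carlo estimates are accurate, so $V$ is Lipschitz-smooth in the policy parameters and the standard descent lemma gives $V(\phi_{k+1}, \pi) \ge V(\phi_{k}, \pi)$ for any sufficiently small step size. Phase B is symmetric: with $\phi$ fixed, the low-level update raises $J(\pi_{\vo}) = \mathbb{E}_{\pi_{\vo}}[\sum_i \gamma^{i-t} r_i]$ for each $\vo$, and since the high-level reward in Section \ref{subsec:hierarchical reinforcement learning with option framework} is defined as $g(\vs_t,\vo_t,\vs_{t+k}) = \mathbb{E}_{\pi_{\vo_t}}[\sum_{i=t}^{t+k}\gamma^{i-t} r_i]$, improving each intra-option return under the options $\phi$ actually selects lifts the composite $V$.

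Chaining the two phases produces a non-decreasing sequence $V(\phi_k, \pi_k)$ that is bounded above by $R_{\max}/(1-\gamma)$ thanks to Assumption \ref{assm}, so it converges. At any limit point $(\phi^{\ast},\pi^{\ast})$ the block gradients $\nabla_{\phi} V$ and $\nabla_{\pi} V$ must both vanish: otherwise one more Phase A or Phase B step with a small enough learning rate would strictly raise $V$ and contradict convergence. Under the standard softmax parameterization used in HDNO, a point at which no further block ascent is possible is a local maximizer of $V$, completing the proposition.

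The hard part is the Phase B argument, because the low-level objective $J(\pi_{\vo})$ is not literally $V$, and one has to certify that its ascent direction agrees with an ascent direction of $V$ with $\phi$ frozen. The bridge is precisely the option-framework identity $g_t = \mathbb{E}_{\pi_{\vo_t}}[\sum_{i=t}^{t+k}\gamma^{i-t} r_i]$: it lets $V$ be written as a positive, monotone functional of $\{J(\pi_{\vo})\}_{\vo\in\sO}$ once $\phi$ is fixed, so ascending each $J(\pi_{\vo})$ independently ascends $V$. A secondary subtlety, which I would flag rather than resolve, is distinguishing an arbitrary block-stationary limit from a genuine local maximizer; the statement's ``can converge to a local maximizer'' is naturally read as existence of a schedule and initialization for which this holds, and the monotone-improvement plus bounded-above argument is sufficient for that existential claim.
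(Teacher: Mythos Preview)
Your proposal and the paper's proof share the same skeleton: alternating (block-coordinate) updates produce a monotone value sequence, bounded rewards cap it, and convergence follows. The differences are in packaging. The paper routes the final step through a Fej\'er-monotone-sequence lemma; you use the more elementary fact that a bounded monotone real sequence converges, which is simpler and loses nothing. You also flesh out two ingredients the paper merely asserts: a reason why low-level improvement lifts the composite $V$ (via the option-reward identity for $g_t$), and an argument that the limit is block-stationary and hence locally maximal. Two minor caveats: HDNO's high-level policy is Gaussian, not softmax (Section~\ref{sec:implementation}), so that remark should be adjusted; and your ``$V$ is a positive, monotone functional of $\{J(\pi_{\vo})\}$'' justification is heuristic, since changing $\pi_{\vo}$ also shifts the terminal-state distribution and hence future option selection---the cleaner warrant is the intra-option policy gradient theorem already cited in Section~\ref{subsec:model}, which directly certifies the low-level REINFORCE direction as an ascent direction for $V$ with $\phi$ held fixed. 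Even so, the paper's own proof simply asserts the monotone chain in Eq.~\ref{eq:seq} without any such justification, so on that count you are more careful, not less.
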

            
        \subsection{Implementation of HDNO}
            \label{sec:implementation}
                In implementation of HDNO, we represent a dialogue act as latent factors \citep{zhao2019rethinking}, which reduces the effort on designing a suitable representation. In detail, a dialogue act $\vz$ (i.e. an indicator representing an option) is sampled from a dialogue policy represented as an isotropic multivariate Gaussian distribution such that $\phi(\vz|\vc; \lambda) = \mathcal{N}( \vz | \vmu(c), \mSigma(\vc) )$, where $\vc$ is a context as well as $\vmu(\vc) \in \mathbb{R}^{K}$ and $\mSigma(\vc) \in \mathbb{R}^{K \times K}$ are parameterized with $\lambda$. Moreover, NLG, i.e. $\mathit{\pi}(\vw_{t}|\vz, \tilde{\vc}_{t}; \nu)$, is represented as a categorical distribution over a word parameterized with $\nu$, conditioned on an option $\vz$ and a context $\tilde{\vc}_{t}$ which involves preceding generated utterances in addition to the context $\vc$ that activates the option $\vz$. The full picture of this architecture is shown in Figure \ref{fig:hierarchical_structure}.
                
                Furthermore, $\phi(\vz|\vc; \lambda)$ (i.e. dialogue policy) is implemented with one-layer linear model and outputs the mean and variance of a multivariate Gaussian distribution. The input of $\phi(\vz|\vc; \lambda)$ is a context vector. In details, the last user utterances are firstly encoded with a bidirectional RNN \citep{schuster1997bidirectional} with gated recurrent unit (GRU) cell \citep{chung2014empirical} and global type attention mechanism \citep{BahdanauCB14}. Then, an oracle dialogue state and an oracle database search result are concatenated to an encoding vector of user utterances to form a context vector. The utterance encoder is only trained during pretraining and fixed as a context extractor during HRL, so that the context space is reduced. On the other hand, $\mathit{\pi}(\vw_{t}|\vz, \tilde{\vc}_{t}; \nu)$ (i.e. NLG) is implemented with a recurrent neural network (RNN) with long short-term memory (LSTM) cell \citep{hochreiter1997long}, where the initial state is the concatenation of a context vector and a dialogue act sampled from dialogue policy. The information in the initial state is assumed to be propagated to the hidden states at the future time steps, so we only feed it in the initial state in implementation.
                
        \subsection{Pretraining with Bayesian Framework}
        \label{sec:gen}
            Compared to VHRED \citep{serban2017hierarchical} proposed in the field of open-domain dialogue systems, the latent dialogue act in HDNO is equivalent to the latent variables in VHRED. However, a context of HDNO includes not only user utterances but also dialogue state and database search result. In this sense, HDNO extends VHRED to the field of task-oriented dialogue systems. As a result, by changing user utterances in VHRED to the context in HDNO, we can directly formulate a variational lower bound following the Bayesian framework and the model in Section \ref{subsec:model} such that
            \begin{equation}
                \mathlarger{\max}_{\lambda, \nu} \ \mathbb{E}_{\vz \sim \phi(\vz|\vc; \lambda)} [ \ \sum_{t=1}^{T} \log \pi(\vw_{t} | \vz, \tilde{\vc}_{t}; \nu) \ ] - \beta \ \text{KL}[ \ \phi(\vz|\vc; \lambda) \ || \ \mathcal{N}(\vz | \mathbf{0}, \mI) \ ],
                \label{eq:total_obj}
            \end{equation}
            where $\mathit{\phi}(\vz|\vc; \lambda)$ is constrained by KL-divergence \citep{kullback1951information} from a multivariate standard Gaussian distribution $\mathcal{N}(\vz|\mathbf{0}, \mI)$. Referring to \citep{higgins2017beta}, we additionally add a multiplier $\mathit{\beta}$ on the term of KL-divergence to control the disentanglement of a latent dialogue act $\vz$. We use Eq.\ref{eq:total_obj} for pretraining dialogue policy and NLG in E2E fashion with oracle system utterances to roughly allocate the roles of these two modules. Nevertheless, restricted with the existing human faults in oracle system utterances (see Section \ref{sec:intro}), we need to further improve it via HRL.
    
    \section{Using Discriminator of Language Models as a Reward}
    \label{subsec:reward_form}
        Benefiting from the availability of non-differentiable evaluation metrics in RL, we can directly apply the success rate on reward denoted by $\mathit{r}^{\text{succ}}$. However, there exists two potential issues: (1) since the success rate is given until the end of a dialogue that is zero in the other turns, which may cause a sparse reward; and (2) the success rate is only correlated to the occurrence of request slots in generated system utterances, the improvement on comprehensibility may be weakened. To mitigate the above drawbacks, we propose to leverage a discriminator modelled as language models (see Definition \ref{def:disc}) as an additional reward. Specifically, at each time step it evaluates each generated word by log-likelihood to reflect the comprehensibility.
        
        \begin{definition}
            Discriminator $\mathit{D}(\vw_{t}|\vw_{t-1}) \mapsto [0, 1]$ is defined as the Markov language model following \citep{yang2018unsupervised}. At some time step $\mathit{\tau}$, for an arbitrary option $\ve = \langle \sI_{e}, \pi_{e} \rangle$, $\vw_{\tau} \sim \pi_{e}( \cdot | \vs_{\tau})$ is a sampled word at state $\vs_{\tau} \in \sI_{e}$. The reward of discriminator to evaluate $\vw_{\tau}$ is defined as $\mathit{r}^{\text{disc}}_{\tau} = \log \mathit{D}(\vw_{\tau}|\vw_{\tau-1})$, where $\vw_{\tau-1}$ is a generated word at time step $\tau-1$.
            \label{def:disc}
        \end{definition}
        
        According to Definition \ref{def:disc}, we can see that $\sum_{t=1}^{T'} \gamma^{t} \mathit{r}^{\text{disc}}$ consistently grows as the joint log-likelihood $\sum_{t=1}^{T'} \log \mathit{D}(\vw_{t}|\vw_{t-1})$ grows, thereby maximizing $\mathbb{E}_{\pi}[ \ \sum_{t=1}^{T'} \gamma^{t} \mathit{r}^{\text{disc}}_{t} \ ]$ is almost equivalent to maximizing $\mathbb{E}[ \ \sum_{t=1}^{T'} \log \mathit{D}(\vw_{t}|
        \vw_{t-1}) \ ]$ when $\gamma$ is around 1 and $\mathit{T}'$ is not too large, where $\mathit{T}'$ denotes the number of time steps in a turn. For this reason, $\mathit{r}^{\text{disc}}$ is suitable for evaluating the comprehensibility of generated system utterances if we presume that the discriminator can well represent human language. Combining the reward of success rate $\mathit{r}^{\text{succ}}$ and the reward of discriminator $\mathit{r}^{\text{disc}}$, we propose a total reward such that
        
        \begin{equation}
            r^{\text{total}}_{t} = (1 - \alpha) \ r^{\text{succ}}_{t} + \alpha \ r^{\text{disc}}_{t},
        \label{eq:total_reward}
        \end{equation}
        
        where $\mathit{\alpha} \in [0, 1]$ is a multiplier controlling the trade-off between these two types of rewards. In implementation, the discriminator is equipped with the same architecture as that of NLG.
        
    \section{Experiments}
    \label{sec:exp}
        \subsection{Experimental Setups}
        \label{subsec:exp_setup}
            \textbf{Dataset Description:} To evaluate the performance of our task-oriented dialogue system, we run experiments on the latest benchmark datasets MultiWoz 2.0 \citep{BudzianowskiWTC18} and MultiWoz 2.1 \citep{eric2019multiwoz}. MultiWoz 2.0 is a large scale task-oriented dialogue dataset including 10425 dialogues that spans 7 distinct domains, where the whole dialogues are generated by human-to-human conversations. Each dialogue is defined with a goal for a user, which may be consisting of 1-5 domains. A dialogue system attempts to fulfill a goal by interacting with a user. As for data preprocessing, we follow the same delexicalized method provided by \citep{BudzianowskiWTC18}, also used in other works \citep{zhao2019rethinking, ChenCQYW19}. On the other hand, MultiWoz 2.1 is a modified version of MultiWoz 2.0 which mainly fixes the noisy dialogue state annotations and corrects 146 dialogue utterances. Finally, the dataset of either MultiWoz 2.0 or MultiWoz 2.1 is split into a training set with 8438 dialogues, a validation set with 1000 dialogues and a testing set with 1000 dialogues \citep{BudzianowskiWTC18,eric2019multiwoz}.
            
            \textbf{Task Description:} Since we only concentrate on learning dialogue policy and natural language generator (NLG), all experiments are conducted on the dialog-context-to-text generation task proposed in \citep{BudzianowskiWTC18}. In this task, it assumes that a dialogue system has access to the oracle dialogue state and database search result. Given user utterances, a dialogue system attempts to generate appropriate utterances as a response in each turn. To train a dialogue system with hierarchical reinforcement learning (HRL), we follow the setups described in Definition \ref{def:option_framework}. Each dialogue is only evaluated with the goal (e.g. calculating the success rate) at the end of dialogue, which means that no evaluation is allowed during any of turns. 
            
            \textbf{Automatic Evaluation Metrics:} Following \citep{budzianowski2017sub}, we leverage three automatic metrics to evaluate generated utterances from a dialogue system such as inform rate, success rate and BLEU score. Inform rate measures whether a dialogue system provides appropriate entities (e.g., the name of restaurant). Success rate shows the ratio of request slots appearing in generated utterances. BLEU score \citep{papineni2002bleu} evaluates the comprehensibility of generated utterances. Finally, we use a popular total score \citep{zhang2019task,mehri2019structured} such that $0.5 \times (\text{Inform} + \text{Success}) + \text{BLEU}$ to fairly evaluate the performance of a dialogue system.
            
            \textbf{Baseline Description:} We compare HDNO with other models such as LaRL \citep{zhao2019rethinking}, HDSA \citep{ChenCQYW19}, and a baseline end-to-end model \citep{BudzianowskiWTC18}. All these models leveraged oracle dialogue states and database search results, which are introduced as follows:
            \begin{itemize}[leftmargin=*]
                \item The baseline end-to-end model \citep{BudzianowskiWTC18} is a model which directly maps a context to system utterances. Followed by \citep{zhao2019rethinking}, we train it with RL, where each generated word is looked as an action. For convenience, we name it as word-level E2E model, abbreviated as WE2E.
                \item LaRL \citep{zhao2019rethinking} is a model that firstly represents a dialogue act as latent factors in the field of task-oriented dialogue system. Specifically, it models a latent dialogue act as categorical variables, each of which is mapped to a continuous embedding vector for learning. During training, it only updates dialogue policy, where a latent categorical dialogue act for each turn is looked as an action.
                \item HDSA \citep{ChenCQYW19} is a model that represents each dialogue act as a hierarchical graph. To fit the oracle dialogue act, a pretrained 12-layer BERT \citep{devlin2019bert} is applied. Then the predicted dialogue act is transformed to the hierarchical graph structure with 3-layer self-attention model \citep{vaswani2017attention}, called disentangled self-attention model. This model is trained only with supervised learning (SL).
            \end{itemize}
            
            \textbf{Experimental Details:} For HDNO \footnote{The source code of implementation of HDNO is on \url{https://github.com/mikezhang95/HDNO}.}, we pretrain a model following Eq.\ref{eq:total_obj} and select the best model with the minimum loss on the validation set; as well as the discriminator is pretrained with oracle system utterances. During HRL, we initialize parameters with the pretrained model and select the best model according to the greatest reward on the validation set. For efficiency, we only use greedy search for decoding in validation. In test, we apply beam search \citep{medress1977speech} for decoding to obtain a better performance. The beam width is selected through the best validation performance for each model. For simplicity, we only show the performance of the best beam width on test set. We use stochastic gradient descent (SGD) for HRL and Adam optimizer \citep{KingmaB14} for pretraining. For the baselines, we train them with the original source codes. The specific details and hyperparameters for training and testing are shown in Appendix \ref{subsec:beam_search}. 
            
            \textbf{Notification:} Please notice that the results of baselines showed in their original papers could be underestimated, which is due to the upgrade of the official evaluator this year \footnote{Please check it via the clarification for this incident below the table called Policy Optimization on the official website of MultiWoz: \url{https://github.com/budzianowski/multiwoz}.}. For this reason, we re-run these experiments with the original open source codes and evaluate the performance of all models (including HDNO) via the latest official evaluator in this work.
            
        \subsection{Main Results}
        \label{subsec:results}
            \subsubsection{Automatic Evaluation Metrics}
            \label{subsec:comparison_baselines}
                \begin{table}[ht!]
                    \centering
                    \caption{The table shows the main results on MultiWoz 2.0 and MultiWoz 2.1 evaluated with the automatic evaluation metrics. The results of HDNO are from the models trained with the proposed reward shape in Section \ref{subsec:reward_form}, where $\alpha=0.0001$ for MultiWoz 2.0 and $\alpha=0.01$ for MultiWoz 2.1.}
                    \label{tab:benchmark}
                    \resizebox{0.95\columnwidth}{!}{
                    \begin{tabular}{lcccccccc}
                        \cmidrule[\heavyrulewidth](r){1-9}
                        \multirow{2}{*}{} &
                        \multicolumn{4}{c}{\textbf{MultiWoz 2.0}} &
                        \multicolumn{4}{c}{\textbf{MultiWoz 2.1}} \\
                        \cmidrule[\heavyrulewidth](r){2-5} \cmidrule[\heavyrulewidth](r){6-9}
                        & \textbf{Inform(\%)} & \textbf{Success(\%)} & \textbf{BLEU(\%)} & \textbf{Total} & \textbf{Inform(\%)} & \textbf{Success(\%)} & \textbf{BLEU(\%)} & \textbf{Total} \\
                        \cmidrule[\heavyrulewidth](r){2-5} \cmidrule[\heavyrulewidth](r){6-9}
                        Human & 91.00 & 82.70 & - & - & 86.30 & 79.10 & - & - \\
                        WE2E & 90.29 & \textbf{86.59} & 14.08 & 102.52 & 90.89 & 83.58 & 14.52 & 101.76 \\
                        LaRL & 93.49 & 84.98 & 12.01 & 101.25 & 92.39 & \textbf{85.29} & 13.72 & 102.56 \\
                        HDSA & 88.90 & 73.40 & \textbf{23.15} & 104.30 & 85.60 & 75.50 & \textbf{21.57} & 102.12 \\
                         Pretraining (Bayesian) & 69.50 & 62.00 & 19.10 & 84.85 & 71.40 & 62.80 & 19.12 & 86.22 \\
                        HDNO (Sync.) & 83.20 & 73.50 & 19.82 & 98.17 & 83.10 & 70.80 & 18.81 & 95.76 \\
                        HDNO (Async.) & \textbf{96.40} & 84.70 & 18.85 & \textbf{109.40} & \textbf{92.80} & 83.00 & 18.97 & \textbf{106.77} \\
                        \cmidrule[\heavyrulewidth](r){1-9}
                    \end{tabular}
                    }
                \end{table}
                We firstly compare HDNO with the state-of-the-art baselines and the human performance on both datasets via automatic evaluation metrics. As Table \ref{tab:benchmark} shows, we can see that HDNO trained with the proposed asynchronous updates between dialogue policy and NLG (i.e. HDNO (Async.)) largely outperforms the baselines on the inform rate and total score, while its BLEU score is lower than that of HDSA. Moreover, the performance of all models trained with RL except for HDNO trained with synchronous updates between dialogue policy and NLG (i.e. HDNO (Sync.)) on the inform rate and success rate exceeds that of the model trained with SL (i.e. HDSA). The possible reason may be that SL is highly dependent on the oracle system utterances and humans may commit faults during generating these dialogues as we stated in Section \ref{sec:intro}. Besides, the poor results on HDNO (Sync.) validates the theoretical analysis in Section \ref{subsec:learn_algo} that synchronous updates between dialogue policy and NLG could cause the failure to approach a local optimum, while HDNO (Async.) shows the success for the asynchronous updates proposed in Proposition \ref{prop:asyn}. Furthermore, the results of pretraining give the evidence that the improvement is actually from the proposed algorithm. For conciseness, we write HDNO instead of HDNO (Async.) in the rest of paper. We also conduct human evaluations for WE2E, LaRL and HDNO that are shown in Appendix \ref{subsec:human evaluation}.
            
            \subsection{Study on Reward Shapes}
            \label{subsec:study_disc}
                \begin{table}[ht!]
                    \centering
                    \caption{The table shows the results of different reward shapes for HDNO on MultiWoz 2.0.}
                    \label{tab:disc_importance}
                    \resizebox{0.75\columnwidth}{!}{
                    \begin{tabular}{lccccc}
                        \toprule
                        & $\mathbf{\alpha}$ & \textbf{Inform(\%)} & \textbf{Success(\%)} & \textbf{BLEU(\%)} & \textbf{Total} \\
                        \midrule[\heavyrulewidth]
                        Success & - & 96.10 & 84.20 & 18.51 & 108.66 \\
                        Success + BLEU & - & 95.60 & 83.30 & 18.99 & 108.44 \\
                        \midrule[\heavyrulewidth]
                        \multirow{4}{*}{Success + Discriminator} & 0.0001 & 96.40 & 84.70 & 18.85 & \textbf{109.40} \\
                        & 0.0005 & 96.30 & \textbf{84.90} & 18.50 & 109.10 \\
                        & 0.001 & 96.20 & 84.20 & \textbf{19.04} & 109.24 \\
                        & 0.005 & \textbf{97.00} & 84.10 & 18.72 & 109.27 \\
                        \bottomrule
                    \end{tabular}
                    }
                \end{table}
                We now compare the proposed reward shape in Eq.\ref{eq:total_reward} with a reward only constituted of the success rate and a reward combining the success rate with BLEU score (i.e. a linear combination similar to the proposed reward shape). As Table \ref{tab:disc_importance} shows, in comparison with other reward shapes, the proposed reward shape (i.e. success + discriminator) performs better on preserving the comprehensibility while improving the success rate and inform rate to maximum. To further study the impact of discriminator in the total reward, we also run several ablation studies on $\alpha$ (see Section \ref{subsec:reward_form}). As Table \ref{tab:disc_importance} shows, the results oscillate within a small range, which means that the proposed reward shape is not sensitive to the hyperparameter $\alpha$ if it is selected within a rational range.
                
            \subsection{Study on Latent Dialogue Act}
            \label{subsec:study_dialogue_act}
                \begin{figure}[ht!]
                  \centering
                  \begin{subfigure}[b]{\textwidth}
                        \centering
                        \includegraphics[width=0.81\linewidth]{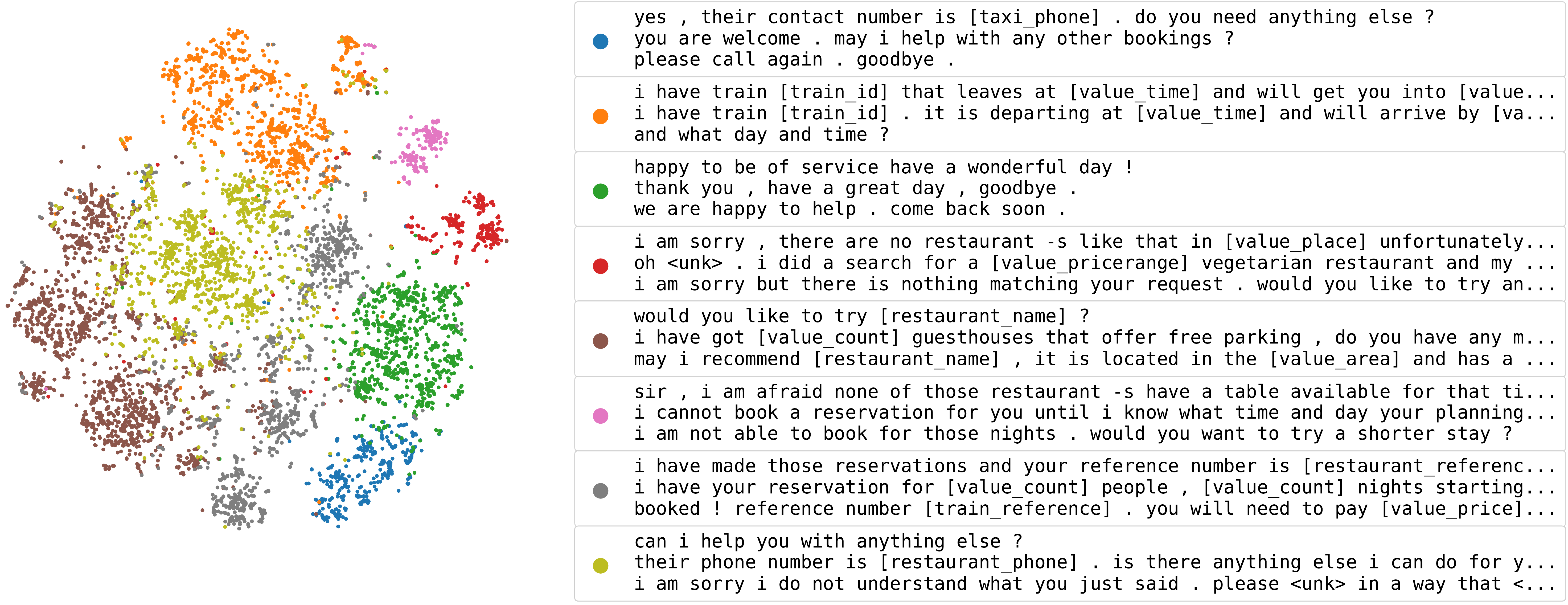}
                        \caption{HDNO.}
                  \end{subfigure}
                  \hfill
                  \begin{subfigure}[b]{\textwidth}
                        \centering
                        \includegraphics[width=0.81\linewidth]{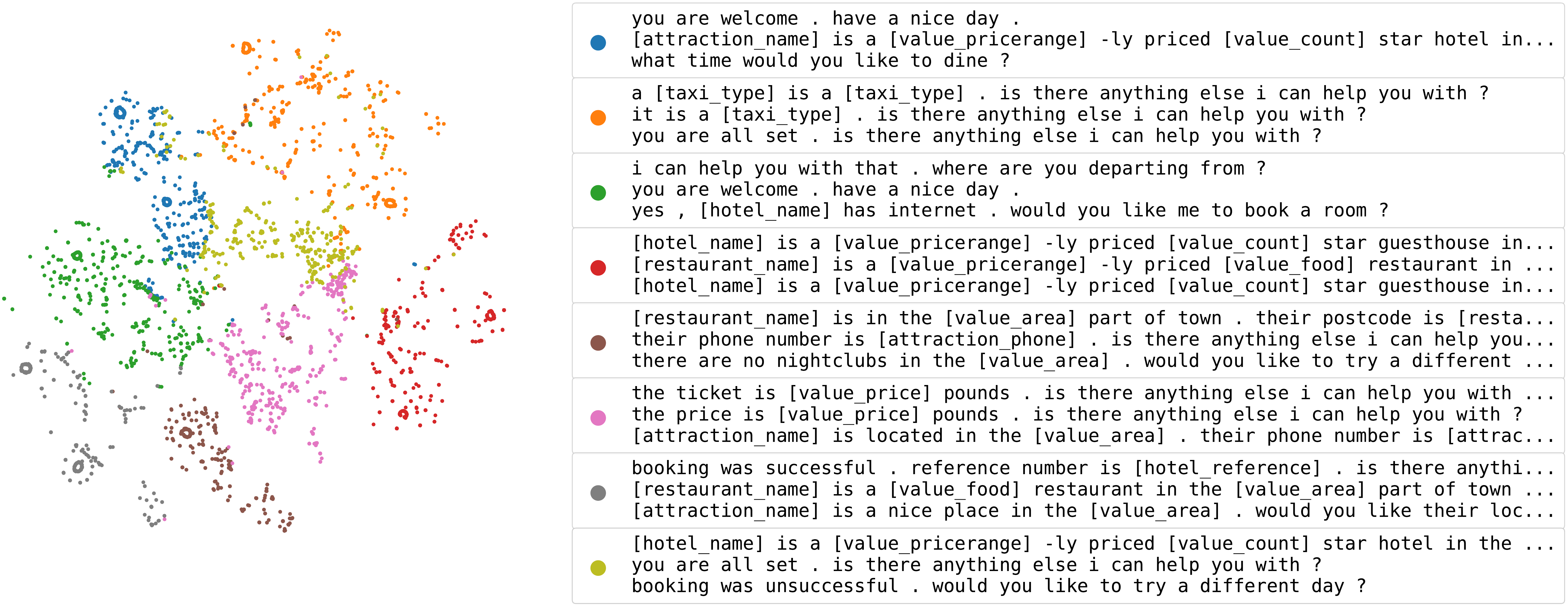}
                        \caption{LaRL.}
                  \end{subfigure}
                  \caption{These diagrams demonstrate latent dialogue acts of HDNO and LaRL clustered in 8 categories on a 2-D plane. Clustering is conducted with k-means algorithm \citep{arthur2006k} on the original dimensions whereas dimension reduction is conducted with T-SNE algorithm \citep{maaten2008visualizing}. We randomly show 3 turns of system utterances for each cluster.}
                  \label{fig:latent_dialogue_act}
                \end{figure}
                We now study the semantic meanings of latent dialogue acts and demonstrate the clustering results as Figure \ref{fig:latent_dialogue_act} shows. To show the explanability of HDNO that other baselines do not possess, we show the results for both HDNO and LaRL. Since LaRL tends to generate duplicate latent dialogue acts, the dots in the diagram are overlapped. Through analyzing the randomly selected system utterances, we find that the clusters of latent dialogue acts of HDNO possesses some semantic meanings, while that of LaRL is difficult to be observed any meaningful explanations. Next, we briefly describe our findings on clusters of HDNO. The clusters in blue dots and green dots are related to the general phrases for goodbye at the end of service; the cluster in orange dots is related to the booking for trains; the cluster in red dots is related to informing user with database search results; the cluster in brown dots is related to recommendation; the cluster in pink dots is related to informing unsuccessful booking; the cluster in grey is related to informing booked; and the cluster in yellow is related to requesting more information. Surprisingly, these semantic meanings of latent dialogue acts are highly correlated with that of the oracle handcrafted dialogue acts (see Appendix \ref{sec:dialogue_act_ontology}) described by \cite{BudzianowskiWTC18}. Therefore, learning latent dialogue act with the option framework \citep{sutton1999between} may potentially substitute for handcrafting dialogue act with ontology, without losing its explanability.
                
    \section{Conclusion and Future Work}
    \label{sec:conclusion}
        In this paper, we present a view on modelling the hierarchical structure between dialogue policy and natural language generator (NLG) with the option framework \citep{sutton1999between} in a task-oriented dialogue system and train it with hierarchical reinforcement learning (HRL). Moreover, we suggest asynchronous updates between dialogue policy and NLG to theoretically guarantee their convergence to a local maximizer. Finally, we propose using a discriminator modelled as language models \citep{yang2018unsupervised} as a reward to further improve the comprehensibility of generated responses. 
        
        In the future work, we are going to extend this work to optimizing all modules by HRL instead of only dialogue policy and NLG, as well as study on solving the problem of credit assignment among these modules \citep{chen2017survey} during training. Moreover, thanks to the option framework \citep{sutton1999between}, the latent dialogue act shows explicit semantic meanings, while disentangling factors of a latent dialogue act (i.e. each latent factor owning a distinct semantic meaning) during HRL is left to be further investigated.
    
    
    \section*{Acknowledgments}
    \label{sec:acknowledgement}
        This work is supported by the Engineering and Physical Sciences Research Council of UK (EPSRC) under awards EP/S000909/1.
        
    \bibliographystyle{unsrtnat}
    \bibliography{iclr2021_conference}
    
    \appendix
    \section*{Appendices}
    
    \section{Proofs}
        \subsection{Proof of Proposition 1}
        \label{extra_proof: proposition 1}
            \begingroup
                \def\theproposition{\ref{prop:syn}}
                \begin{proposition}
                    Following the model of Definition \ref{def:option_framework}, if $\phi(\vo|\mathit{\pi}, \vs)$ and $\pi$ are synchronously updated, the value does not always monotonically improve and the policies may never converge to a local maximizer.
                \end{proposition}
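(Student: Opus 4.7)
The plan is to work with the composite state $\langle \pi, \vs\rangle$ explicitly and show that the synchronous update rule does not satisfy the policy improvement inequality $v_{k+1}(\vs) \ge v_{k}(\vs)$ that underlies standard policy--gradient convergence. First I would fix notation: let $\phi_{k}$ and $\pi_{k}=\{\pi_{\vo,k}\}_{\vo\in\sO}$ denote the two policies at iteration $k$, and let $v_{k}(\vs):=v\!\left(\vs\mid \phi_{k}(\vo\mid \pi_{k},\vs)\right)$ be the associated value at state $\vs$. A synchronous update produces $(\phi_{k+1},\pi_{k+1})$ simultaneously from gradients computed at $(\phi_{k},\pi_{k})$. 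The subtlety is that $\nabla J(\phi)$ at step $k$ is the gradient of the objective $v(\vs\mid \phi(\vo\mid \pi_{k},\vs))$ with $\pi_{k}$ held fixed, so the ascent direction is only guaranteed to improve $v(\vs\mid \phi(\vo\mid \pi_{k},\vs))$, \emph{not} $v(\vs\mid \phi(\vo\mid \pi_{k+1},\vs))$.

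Next I would make this formal by decomposing the one-step difference
\[
v_{k+1}(\vs)-v_{k}(\vs)=\bigl[v(\vs\mid \phi_{k+1}(\cdot\mid \pi_{k},\vs))-v_{k}(\vs)\bigr]+\bigl[v(\vs\mid \phi_{k+1}(\cdot\mid \pi_{k+1},\vs))-v(\vs\mid \phi_{k+1}(\cdot\mid \pi_{k},\vs))\bigr].
\]
The first bracket is non-negative under a sufficiently small learning rate by the standard policy gradient improvement argument; the second bracket is the ``state-shift'' term induced by replacing $\pi_{k}$ with $\pi_{k+1}$ inside the conditioning of $\phi_{k+1}$, and it has no fixed sign. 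I would argue that the shift term can be made arbitrarily large in magnitude relative to the improvement term because $\phi_{k+1}$ was selected to be optimal against $\pi_{k}$, so its action distribution can be highly miscalibrated for $\pi_{k+1}$. This immediately refutes monotonic improvement.

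From the failure of monotonicity I would then exhibit a minimal counterexample that also breaks convergence: a two-state MDP with two options and two primitive actions, engineered so that the best option under $\pi_{k}$ becomes the worst option under $\pi_{k+1}$ and vice versa. Running synchronous updates in this instance produces an oscillation between $(\phi_{\mathrm{A}},\pi_{\mathrm{A}})$ and $(\phi_{\mathrm{B}},\pi_{\mathrm{B}})$ whose values are both strictly below a nearby local maximizer, demonstrating that the iterates can fail to converge to any local maximizer. Appealing to Assumption \ref{assm} only guarantees accurate Monte Carlo estimates of $v$; it does not remove the structural mismatch between the gradient direction and the actual post-update value.

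The main obstacle I anticipate is the counterexample construction: it must be small enough to verify by hand yet respect the full option-framework semantics of Definition \ref{def:option_framework} (so that $\vo$ carries both $\sI_{\vo}$ and $\pi_{\vo}$, and $\phi$ is genuinely conditioned on the composite state $\langle\pi,\vs\rangle$). A clean way to do this is to let each option deterministically call a distinct low-level policy so that updating $\pi$ literally relabels the options, making the sign flip of the shift term transparent. The rest of the argument—bounding the improvement term via a standard smoothness/lower-bound lemma and exhibiting the oscillation—should then follow routinely.
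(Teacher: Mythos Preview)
Your approach shares the paper's core observation: the gradient step for $\phi$ is taken with $\pi_{k}$ held fixed in the composite state $\langle\pi,\vs\rangle$, so the ascent guarantee applies to $v\bigl(\vs\mid\phi_{k+1}(\cdot\mid\pi_{k},\vs)\bigr)$ rather than to the post-update value $v\bigl(\vs\mid\phi_{k+1}(\cdot\mid\pi_{k+1},\vs)\bigr)$. The paper's own proof stops essentially at that point: it records the two separate improvement inequalities (one for $\phi$ with $\pi$ fixed, one for $\pi$ with $\phi$ fixed) and then simply asserts that the joint-update value $v\bigl(\vs\mid\phi^{q+1}(\pi^{q+1})\bigr)$ ``could'' fall below $v\bigl(\vs\mid\phi^{q}(\pi^{q})\bigr)$, with no explicit construction and no decomposition of the gap.

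Your plan is therefore the same idea carried further. The telescoping into an ``improvement term'' plus a ``state-shift term'' makes precise what the paper leaves implicit, and your intended two-option oscillation example would actually certify both the non-monotonicity and the non-convergence claims, whereas the paper only argues possibility. What this buys you is rigor; what the paper's version buys is brevity. Two small notes: Proposition~\ref{prop:syn} in the paper does not invoke Assumption~\ref{assm} at all (that assumption is reserved for the asynchronous convergence result), so you need not discuss Monte Carlo accuracy here; and in your decomposition be explicit that $v(\vs\mid\phi_{k+1}(\cdot\mid\pi_{k},\vs))$ means both conditioning on and executing $\pi_{k}$, since $\pi$ enters the value in two ways and the second bracket must absorb both effects of moving to $\pi_{k+1}$.
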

                
                \begin{proof}
                    At an arbitrary time step $\mathit{t} \in \sN$, assume that both of $\phi(\vo|\mathit{\pi}, \vs)$ and $\pi$ have been updated for $\mathit{q} \in \sN$ times, denoted as $\phi^{\mathit{q}}(\pi^{\mathit{q}})$ and $\pi^{\mathit{q}}$ respectively for conciseness. The current value for any arbitrary state $\vs \in \sS$ is denoted as $v\big( \vs|\phi^{\mathit{q}}(\pi^{\mathit{q}}) \big)$. If we synchronously update $\phi^{\mathit{q}}(\pi^{\mathit{q}})$ and $\pi^{\mathit{q}}$, we can obtain the following equations on value after updates at time step $\mathit{t} \in \mathbb{N}$ such that
                    \begin{equation}
                        \begin{cases}
                            v^{t} \big( \vs|\phi^{q+1}(\pi^{q}) \big) \geq v^{t} \big( \vs|\phi^{q}(\pi^{q}) \big),\\
                            v^{t} \big( \vs|\phi^{q}(\pi^{q+1}) \big) \geq v^{t} \big( \vs|\phi^{q}(\pi^{q}) \big).
                        \end{cases}
                        \label{eq: lemma1_1}
                    \end{equation}
                    At next time step $\mathit{t+1}$, however, the actual value for any arbitrary state $\vs$ that we obtain from the last synchronous update is the equation such that 
                    \begin{equation}
                        v^{t+1} \big( \vs|\phi^{q+1}(\pi^{q+1}) \big),
                    \end{equation}
                    and the following scenario such that
                    \begin{equation}
                        v^{t+1} \big( \vs|\phi^{q+1}(\pi^{q+1}) \big) < v^{t+1} \big( \vs|\phi^{q}(\pi^{q}) \big)
                    \end{equation}
                    could happen, which means that the monotonic improvement path of the value cannot always hold during learning. Since the value is an evaluation of policies, the policies could never converge to a local maximizer.
                \end{proof}
            \endgroup
            
        \subsection{Proof of Proposition 2}
        \label{extra_proof: proposition 2}
            \begin{lemma}[\cite{bauschke2011convex}]
                (1) Any increasing bounded sequence $(x_{n})_{n \in \mathbb{N}}$ in $\sR$ is Fej\'er Monotone with respect to $\sup\{ x_{n}\}_{n \in \mathbb{N}}$ such that $\forall n \in \mathbb{N}, \ \ || x_{n+1} - \sup\{ x_{n}\}_{n \in \mathbb{N}} || \leq || x_{n} - \sup\{ x_{n}\}_{n \in \mathbb{N}} ||$. (2) For a Fej\'er Monotone Sequence $(x_{n})_{n \in \mathbb{N}}$, $|| x_{n} - \sup\{ x_{n}\}_{n \in \mathbb{N}} ||_{n \in \mathbb{N}}$ converges, $|| \cdot ||$ is an arbitrary norm in $\mathbb{R}$.
                \label{lem: fejer}
            \end{lemma}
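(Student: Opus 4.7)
The plan is to handle the two parts separately, each as a short application of classical real-analysis facts. A useful preliminary observation is that on the one-dimensional real vector space $\mathbb{R}$, every norm takes the form $\|x\| = c\,|x|$ for some fixed positive constant $c$; consequently, Fej\'er monotonicity with respect to an arbitrary norm on $\mathbb{R}$ is equivalent to Fej\'er monotonicity with respect to $|\cdot|$. I would state this up front so that the rest of the argument can be carried out in absolute values without loss of generality.

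For Part (1), let $s = \sup\{x_n\}_{n \in \mathbb{N}}$; this supremum is a finite real number because $(x_n)$ is bounded above. Since the sequence is increasing, $x_n \leq s$ for every $n$, so $|x_n - s| = s - x_n$. The desired inequality $|x_{n+1} - s| \leq |x_n - s|$ then reduces to $s - x_{n+1} \leq s - x_n$, which is exactly the monotonicity hypothesis $x_n \leq x_{n+1}$ rearranged. This closes Part (1).

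For Part (2), Fej\'er monotonicity of $(x_n)$ with respect to $s$ gives $\|x_{n+1} - s\| \leq \|x_n - s\|$ for every $n$, so $(\|x_n - s\|)_{n \in \mathbb{N}}$ is a non-increasing sequence of non-negative real numbers. The monotone convergence theorem for real sequences then yields convergence to $\inf_{n}\|x_n - s\| \geq 0$, which is exactly the claim; no use of Part (1) is needed here, since the Fej\'er assumption is given.

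The only conceptual hurdle is the translation between an abstract norm on $\mathbb{R}$ and the absolute value; once that is dispatched, both parts are essentially one-line consequences of the definition of supremum and of the monotone convergence theorem, so I do not anticipate any serious obstacle in fleshing out the proof. A minor care point is to make sure, in Part (1), that the monotonicity of $(x_n)$ is actually used (it is what guarantees $x_n \leq s$ and hence the removal of the absolute value), so that the argument is not conflated with the weaker statement for merely bounded sequences, for which Fej\'er monotonicity need not hold.
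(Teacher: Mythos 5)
Your proof is correct. Note, however, that the paper itself offers no proof of this lemma: it is imported wholesale from the cited monograph of Bauschke and Combettes, so there is no in-paper argument to compare yours against. Your elementary derivation is a perfectly good self-contained substitute, and it handles the two parts in the only sensible way: the preliminary reduction of an ``arbitrary norm on $\mathbb{R}$'' to a positive multiple of the absolute value is exactly the right move and disposes of the one piece of apparent generality in the statement; Part (1) then falls out of $x_n \leq \sup_m x_m$ together with monotonicity, and Part (2) is the monotone convergence theorem applied to the non-increasing, non-negative sequence $\bigl(\|x_n - s\|\bigr)_{n}$. Your closing caveat --- that monotonicity of $(x_n)$ is genuinely needed in Part (1) to strip the absolute value, since a merely bounded sequence need not be Fej\'er monotone with respect to its supremum --- is well taken and worth keeping. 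One small presentational point: as stated, Part (2) does not say with respect to which point the sequence is Fej\'er monotone; you correctly read it as being with respect to $s = \sup\{x_n\}$ (which also implicitly requires that supremum to be finite so that the distances are defined), and it would be worth making that reading explicit since it is the only interpretation under which the displayed quantity makes sense.
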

            \begingroup
                \def\theassumption{\ref{assm}}
                    \begin{assumption}
                        (1) Reward function is bounded. (2) With sufficient number of samples, the Monte Carlo estimation for value on any state is accurate enough.
                    \end{assumption}
                \def\theproposition{\ref{prop:asyn}}
                    \begin{proposition}
                        Following the model of Definition \ref{def:option_framework} and Assumption \ref{assm}, if $\phi(\vo|\mathit{\pi}, \vs)$ and $\pi$ are asynchronously updated, the value can improve monotonically during learning and both policies can finally converge to a local maximizer.
                        \begin{proof}
                            Firstly, in REINFORCE \citep{williams1992simple} the value is approximated by Monte Carlo estimation. Following (2) in Assumption \ref{assm} and the denotation shown in the proof of Proposition \ref{prop:syn} above, if we update $\phi(\pi)$ and $\pi$ asynchronously every $\mathit{k} \in \mathbb{N}$ steps, from some time step $\mathit{t} \in \mathbb{N}$, assuming that both of policies have been updated $\mathit{q} \in \mathbb{N}$ times, we can construct a monotonically increasing sequence of values for any arbitrary state $\vs \in \sS$ during learning such that
                            \begin{align}
                                &v^{t} \big( \vs|\phi^{q}(\pi^{q}) \big) \leq v^{t+k} \big( \vs|\phi^{q+1}(\pi^{q}) \big) \leq v^{t+2k} \big( \vs|\phi^{q+1}(\pi^{q+1}) \big) \notag \\
                                &\leq v^{t+3k} \big( \vs|\phi^{q+2}(\pi^{q+1}) \big) \leq ... \leq 
                                \begin{cases}
                                v^{t+nk} \big( \vs|\phi^{q+\frac{n}{2}}(\pi^{q+\frac{n}{2}}) \big), \ \ \ \ \ \ \ \ \ \ \ \ \text{if n is an even number},\\
                                v^{t+nk} \big( \vs|\phi^{q+\frac{n+1}{2}}(\pi^{q+\frac{n-1}{2}}) \big), \ \ \ \ \text{    if n is an odd number}.
                                \end{cases}
                                \label{eq:seq}
                            \end{align}
                            Due to (1) in Assumption \ref{assm}, we get that the value is bounded and we consider that $\mathit{v}(\vs|\mathit{\phi}(\pi)) \in \mathbb{R}$, $\forall \vs \in \sS$. According to (1) Lemma \ref{lem: fejer}, the sequence of values is Fej\'er Monotone with respect to the maximum value. For simplicity, we denote the sequence of values in Eq.\ref{eq:seq} as $\{v^{m}\}_{m \in \{ t + nk | t, b, k \in \mathbb{N} \}}$ and the maximum value as $v^{*}$. Since (2) in Lemma \ref{lem: fejer}, we can conclude that $\{|| v^{m} - v^{*} ||\}_{m \in \{ t + nk | t, b, k \in \mathbb{N} \}}$ can converge. Also, since the $v^{*}$ has to be the final item of the sequence, we can write that
                            \begin{equation}
                                \big|\big| \ || v^{m} - v^{*} || - || v^{*} - v^{*} || \ \big|\big| \rightarrow 0, \ m \rightarrow \infty.
                                \label{eq:new_seq}
                            \end{equation}
                            If we rearrange the left hand side of Eq.\ref{eq:new_seq}, we can obtain the result such that 
                            \begin{equation}
                                v^{m} \rightarrow v^{*}, \ m \rightarrow \infty.
                                \label{eq:new_seq1}
                            \end{equation}
                            From Eq.\ref{eq:new_seq1}, we can conclude that finally the sequence of $v(\vs|\phi(\pi))$ will converge to some local maximum. Since the value is an evaluation of $\phi(\mathit{o}|\mathit{\pi}, \vs)$ and $\pi(\vw|\vs)$, we can get the conclusion that the asynchronous updates enable these two policies to converge to a local maximizer.
                        \end{proof}
                    \end{proposition}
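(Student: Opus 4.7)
The plan is to exploit the fact that, in contrast to the synchronous case, an asynchronous update holds one of the two policies fixed while the other is updated, so the composite state $\langle \pi, \vs \rangle$ presented to $\phi(\vo|\pi, \vs)$ does not shift during that particular policy gradient step. First I would fix notation as in the proof of Proposition~\ref{prop:syn}: after $q$ asynchronous updates let the current pair be $\phi^{q}(\pi^{q})$ and $\pi^{q}$, and alternate updates so that odd-indexed steps adjust $\phi$ (with $\pi$ frozen) and even-indexed steps adjust $\pi$ (with $\phi$ frozen). By the option-level policy gradient identity and the intra-option policy gradient theorem cited in Section~\ref{subsec:model}, each of these single-coordinate REINFORCE updates is a gradient ascent step on $v(\vs \mid \cdot)$ viewed as a function of the policy being updated, so it yields a (weak) per-step improvement for every $\vs \in \sS$.

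Next I would chain these coordinatewise improvements into a single nondecreasing sequence of values $v^{t+nk}\big(\vs \mid \phi^{(\cdot)}(\pi^{(\cdot)})\big)$, exactly as displayed in Eq.~\ref{eq:seq}. The key observation — and the point at which the argument departs from Proposition~\ref{prop:syn} — is that between two consecutive evaluations the composite state seen by the \emph{non-updated} policy is unchanged, so the inequalities genuinely chain without the inconsistency introduced by a simultaneous shift in $\pi$. Assumption~\ref{assm}(2) allows me to identify the Monte Carlo estimates used by REINFORCE with the true values in this chain, so the sequence is monotone in the literal sense needed for the next step.

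The closing step is to combine boundedness with Fej\'er monotonicity. Assumption~\ref{assm}(1) bounds the reward, hence bounds $v$ uniformly in the policies, so the nondecreasing sequence $\{v^{m}\}$ is bounded above and has a finite supremum $v^{\ast} = \sup_{m} v^{m}$. Lemma~\ref{lem: fejer}(1) then classifies the sequence as Fej\'er monotone with respect to $v^{\ast}$, and Lemma~\ref{lem: fejer}(2) gives convergence of $\|v^{m}-v^{\ast}\|$. Rearranging as in Eq.~\ref{eq:new_seq}--\ref{eq:new_seq1} yields $v^{m}\to v^{\ast}$, and because the value functional evaluates the policy pair, the sequence $(\phi^{(\cdot)},\pi^{(\cdot)})$ converges to a joint local maximizer.

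The main obstacle I expect is the final implication, namely that convergence of the value sequence to the supremum of its own trajectory certifies convergence of the \emph{policies} to a local maximizer of $v$ over the joint policy space. The chain only produces coordinatewise optimality in the limit (a block-coordinate ascent fixed point), and promoting this to a genuine local maximum on $\phi\times\pi$ implicitly requires regularity of $v$ in the policy parameters (continuity, and ideally differentiability of the REINFORCE objective) together with the standard observation that a stationary point of alternating gradient ascent on a smooth bounded function is a local maximizer when paired with the monotone improvement guarantee above. I would state this regularity explicitly as part of Assumption~\ref{assm} if a fully rigorous conclusion is required, and otherwise note that the asynchronous scheme at least removes the obstruction identified in Proposition~\ref{prop:syn}.
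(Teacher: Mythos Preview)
Your proposal is correct and follows essentially the same route as the paper: build the monotone chain of values in Eq.~\ref{eq:seq} by alternating updates with the other policy frozen, invoke Assumption~\ref{assm}(1) for boundedness, and then apply Lemma~\ref{lem: fejer} to obtain $v^{m}\to v^{\ast}$ as in Eqs.~\ref{eq:new_seq}--\ref{eq:new_seq1}. Your added justification for \emph{why} each coordinate step is nondecreasing (the option-level and intra-option policy gradient identities, plus the observation that the composite state $\langle \pi,\vs\rangle$ is held fixed during a $\phi$-update) is more explicit than the paper, which simply asserts the chain; and the gap you flag in the final implication---that convergence of the value sequence to its own supremum does not by itself certify that the policy pair is a local maximizer on $\phi\times\pi$---is a genuine concern that the paper's proof also leaves unaddressed.
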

            \endgroup
    
        
    \section{Extra Experimental Results}
    \label{sec:extra_experimental_results}
        \subsection{Human Evaluation}
        \label{subsec:human evaluation}
            \begin{figure*}[ht!]
                \centering
                \begin{subfigure}[b]{0.38\textwidth}
                    \includegraphics*[width=\textwidth]{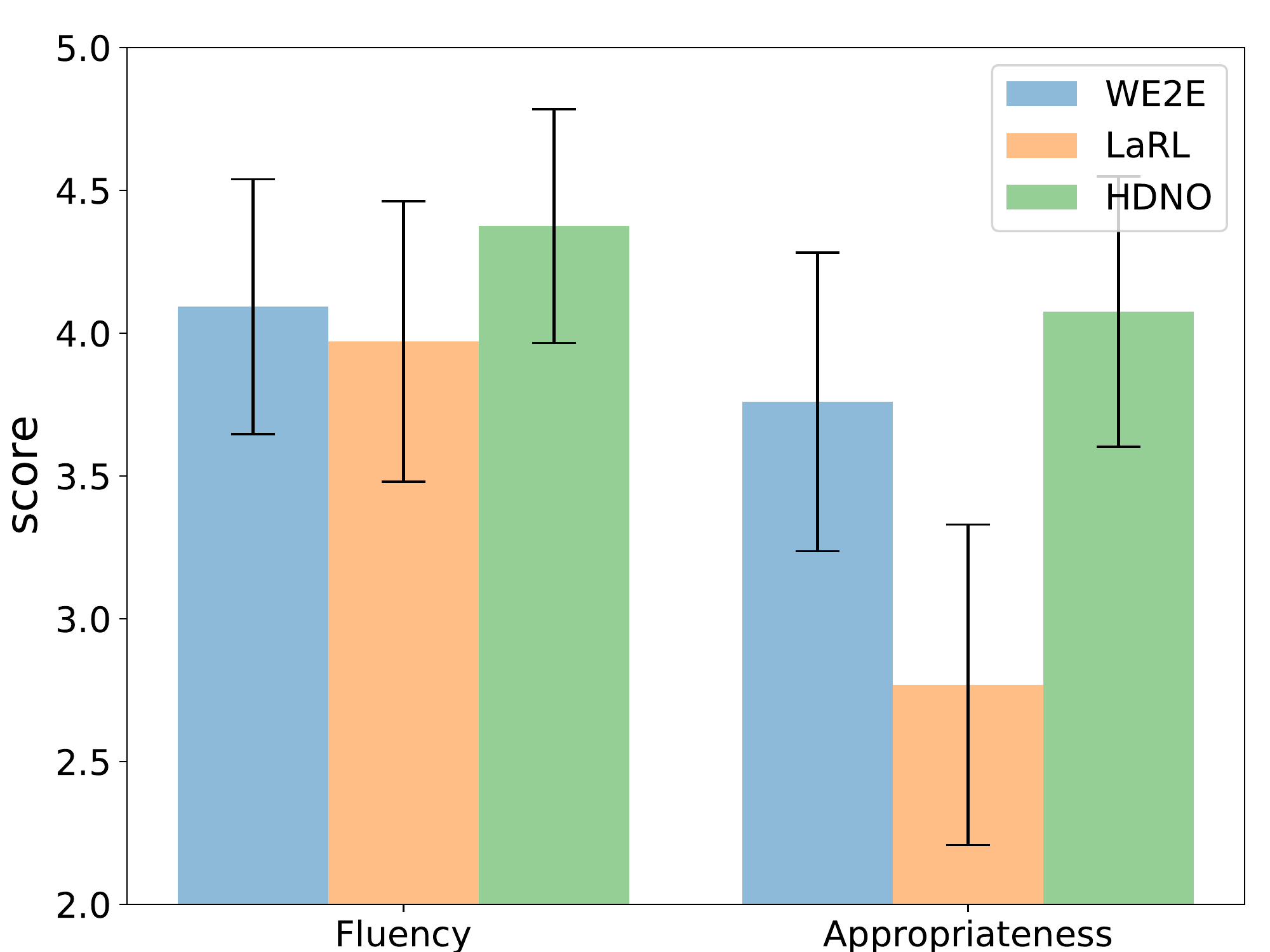}
                    \caption{Statistical results.}
                    \label{fig:bar}
                \end{subfigure}
                \begin{subfigure}[b]{0.52\textwidth}
                    \includegraphics*[width=\textwidth]{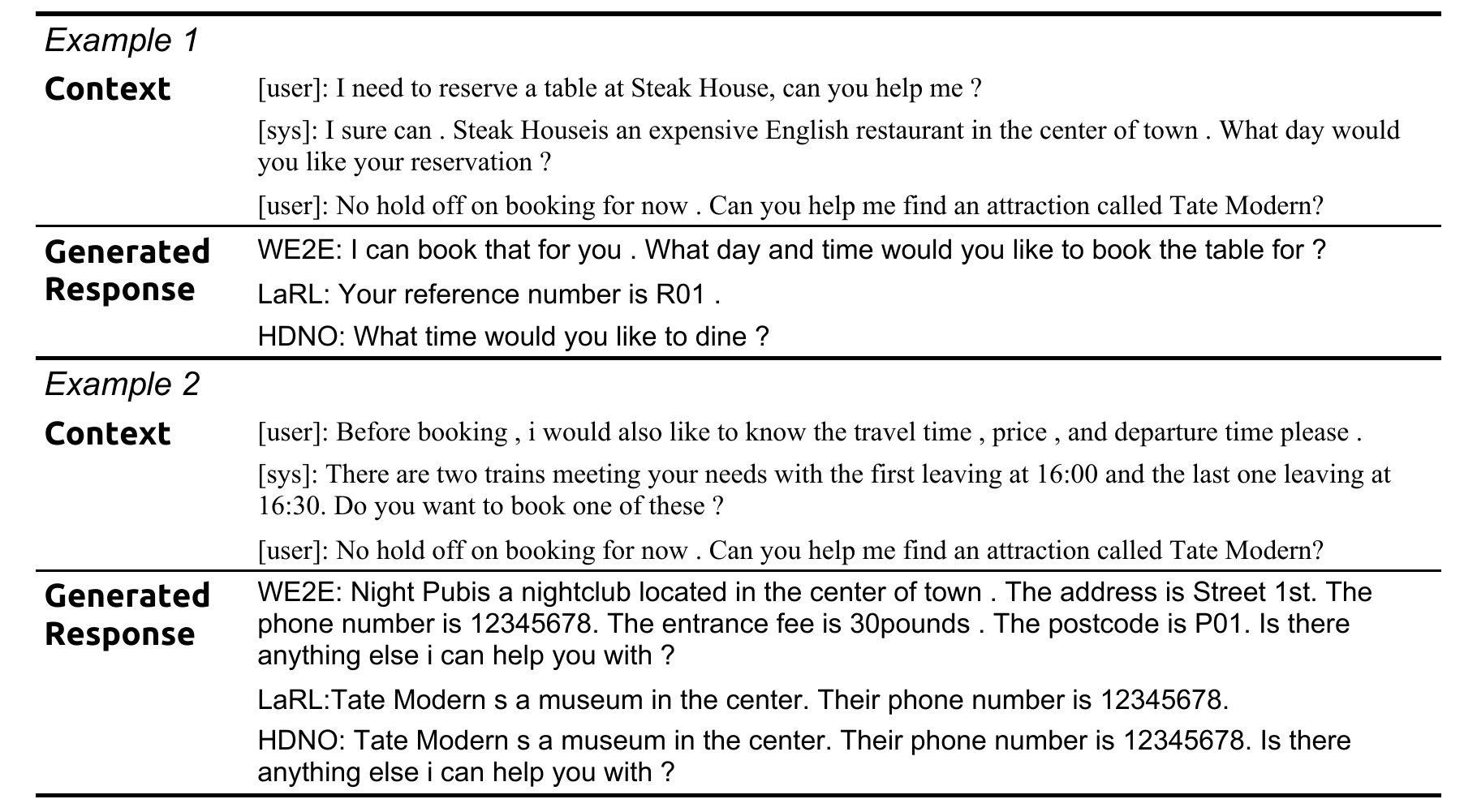}
                    \caption{Example generated responses.}
                    \label{fig:example}
                \end{subfigure}
                \caption{(a) This figure shows the statistical results of human evaluation, where 30 people participated in this human evaluation and the questionnaire is constituted of 31 randomly selected turns of dialogues. (b) This figure shows the example generated responses from the questionnaire.}
                \label{fig:human_evaluation}
            \end{figure*}
            
            Due to the possible inconsistency between automatic evaluation metrics and human perception, we conducted human evaluation on comparing the quality of generated responses. We provide two criteria for human assessors to evaluate the generated responses: (1) fluency: how fluent the generated responses are (i.e., with no obvious grammar errors and redundant descriptions); and (2) appropriateness: how related the generated responses are to the provided context. For each criterion, a human assessor is allowed to give a score ranging from 1 to 5, where 5 indicates the best and 1 indicates the worst performance. Then, we calculated the mean and the variance for the score of each model. The final results and some example generated responses from the questionnaire are shown in Figure \ref{fig:human_evaluation}. The proposed HDNO performs best in the human evaluation, especially outperforming LaRL \citep{zhao2019rethinking} on appropriateness so much.
                
        \subsection{Learning Curves}
        \label{subsec:learning_curves}
            \begin{figure*}[ht!]
                \centering
                \begin{subfigure}[b]{0.32\textwidth}
                    \includegraphics*[width=\textwidth]{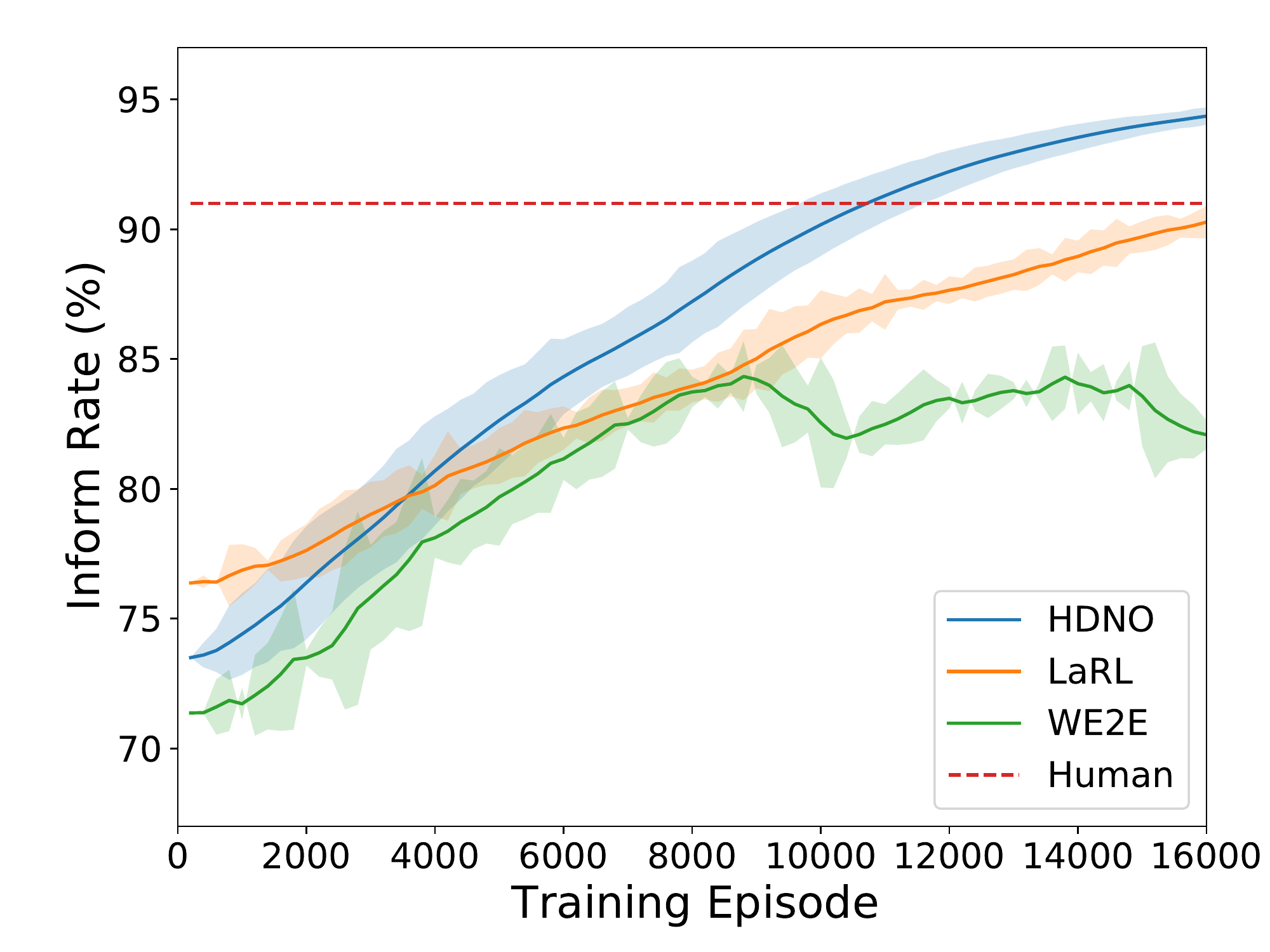}
                    \caption{Inform rate during training on MultiWoz 2.0.}
                \end{subfigure}
                \begin{subfigure}[b]{0.32\textwidth}
                    \includegraphics*[width=\textwidth]{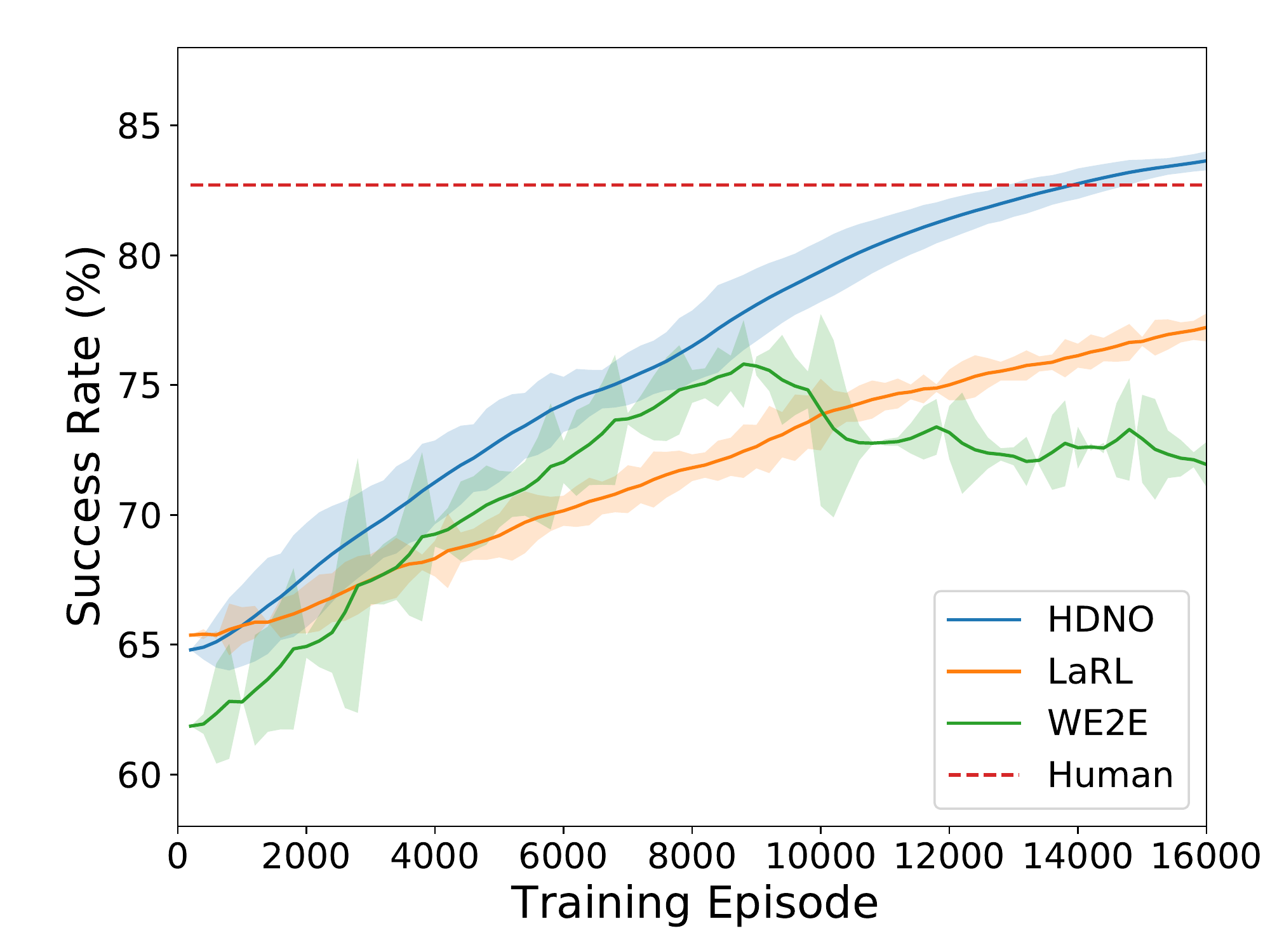}
                    \caption{Success rate during training on MultiWoz 2.0.}
                \end{subfigure}
                \begin{subfigure}[b]{0.32\textwidth}
                    \includegraphics*[width=\textwidth]{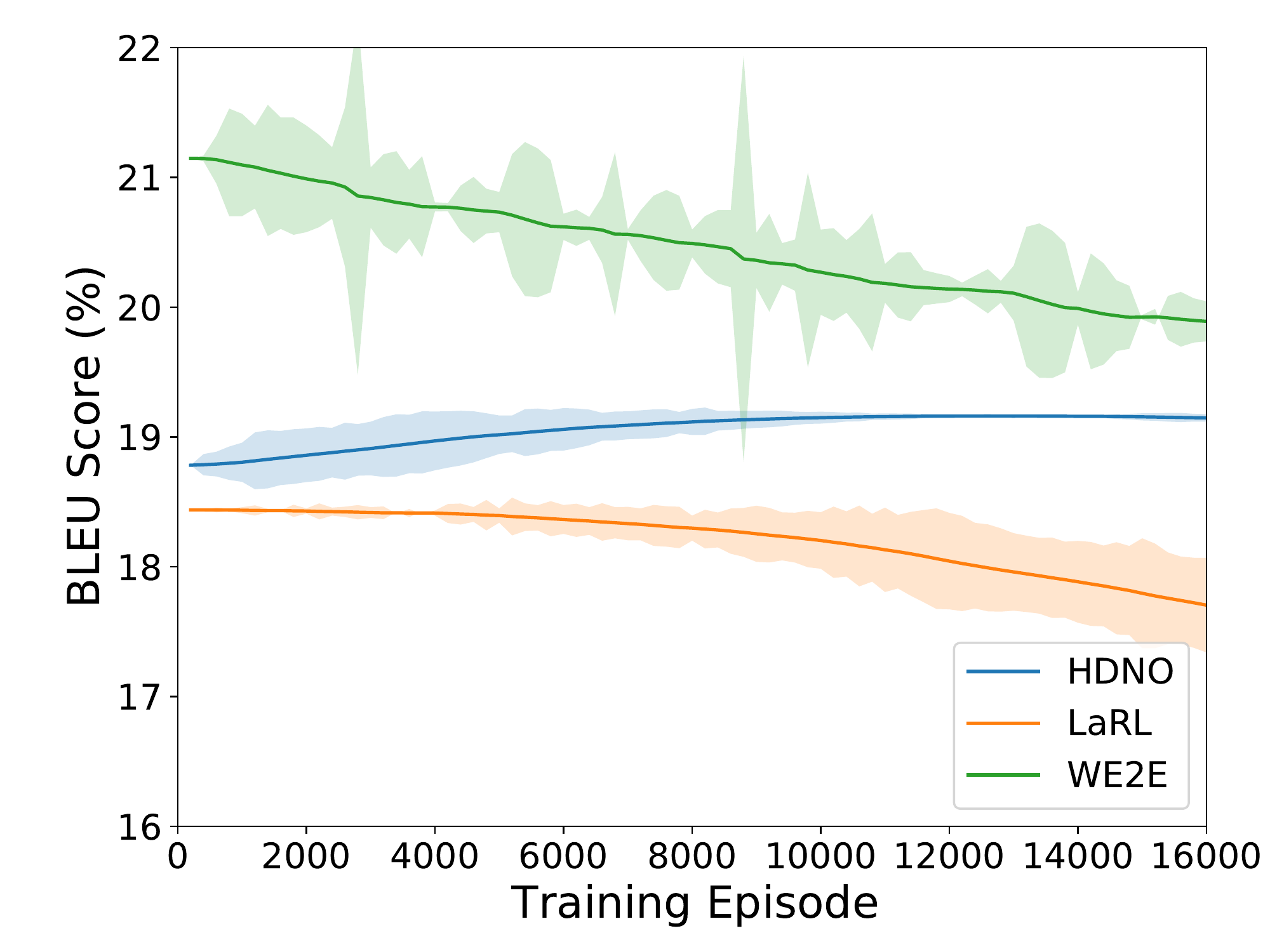}
                    \caption{BLEU score during training on MultiWoz 2.0.}
                \end{subfigure}
                \quad
                \begin{subfigure}[b]{0.32\textwidth}
                    \includegraphics*[width=\textwidth]{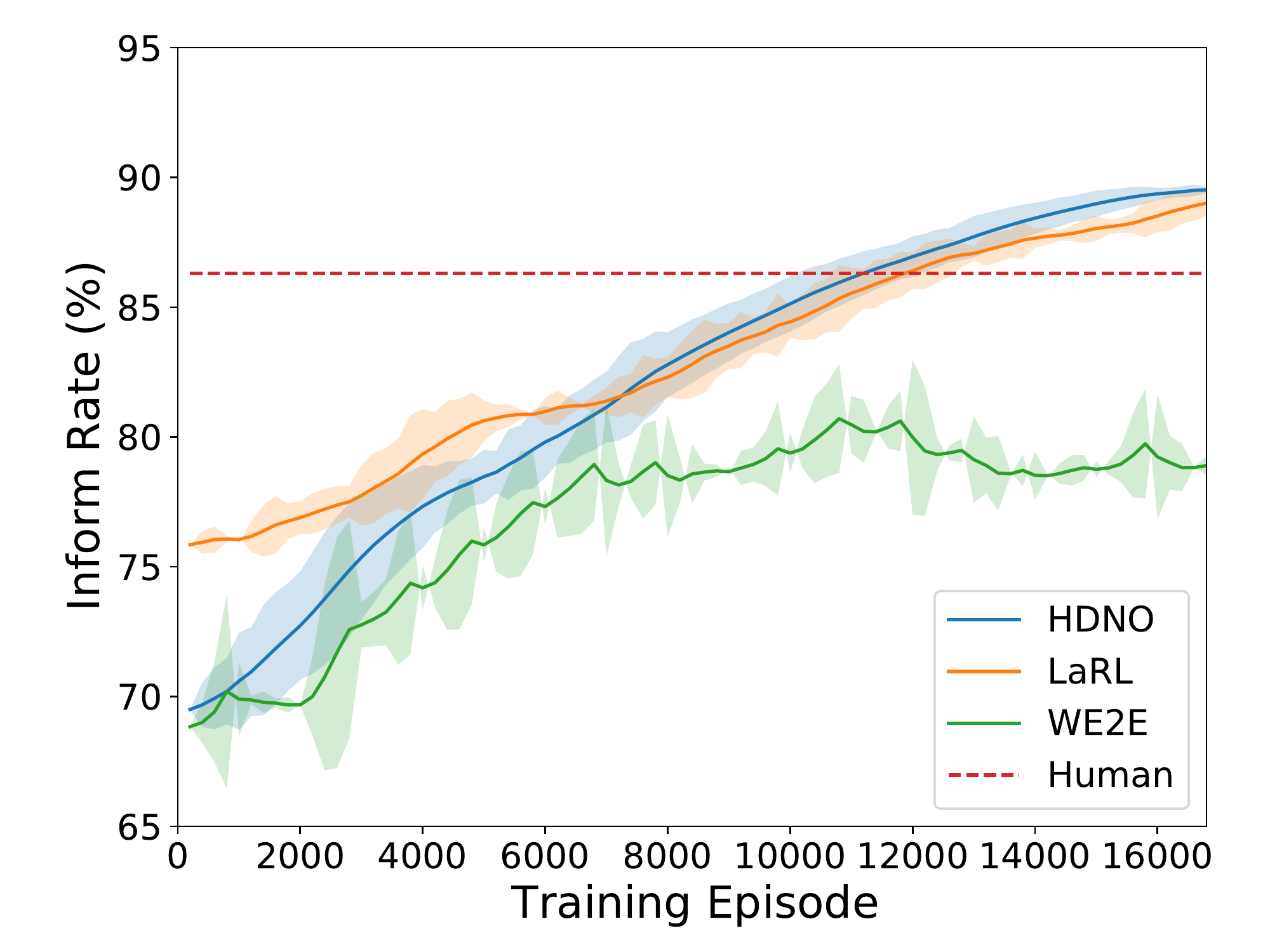}
                    \caption{Inform rate during training on MultiWoz 2.1.}
                \end{subfigure}
                \begin{subfigure}[b]{0.32\textwidth}
                    \includegraphics*[width=\textwidth]{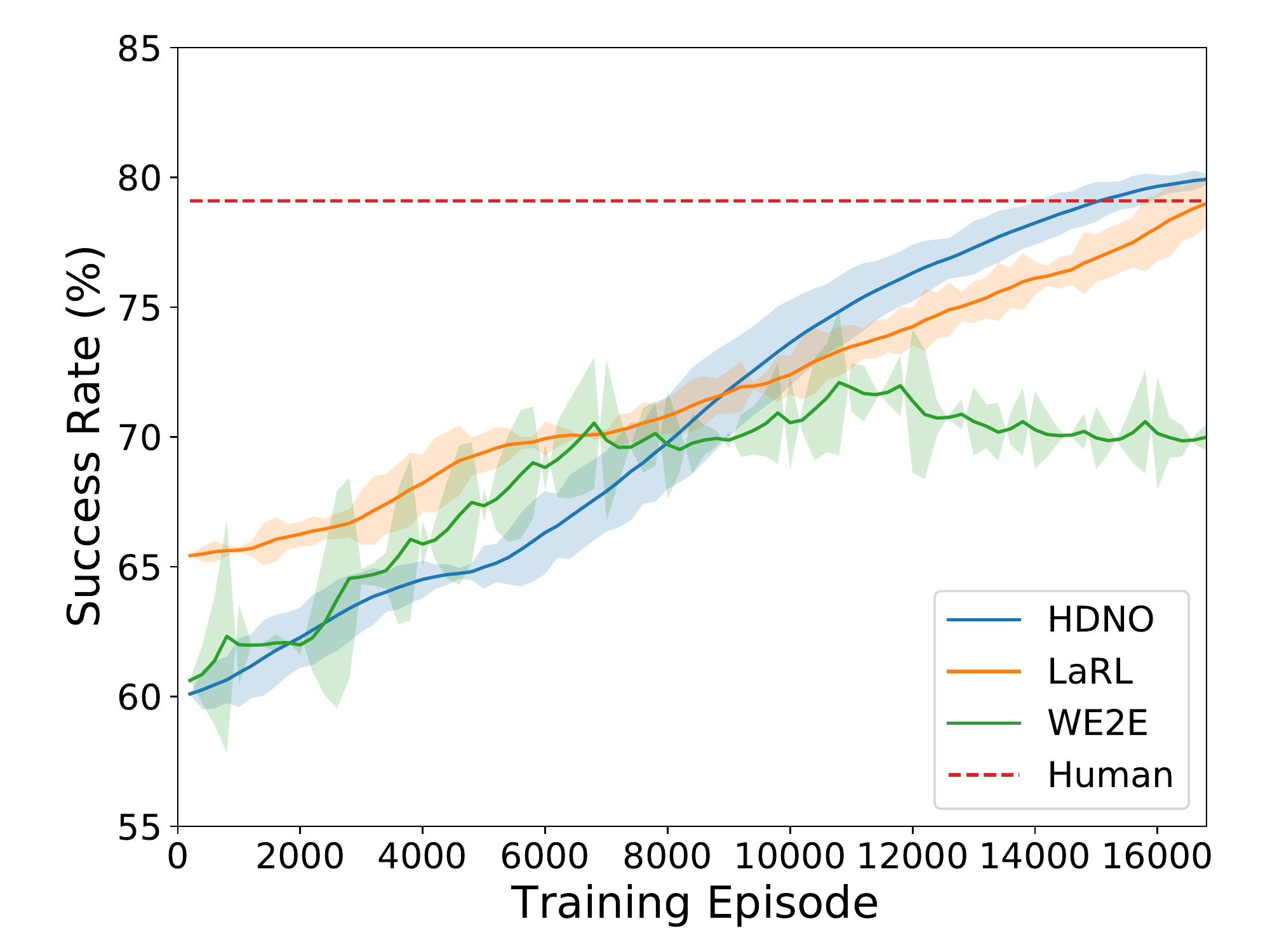}
                    \caption{Success rate during training on MultiWoz 2.1.}
                \end{subfigure}
                \begin{subfigure}[b]{0.32\textwidth}
                    \includegraphics*[width=\textwidth]{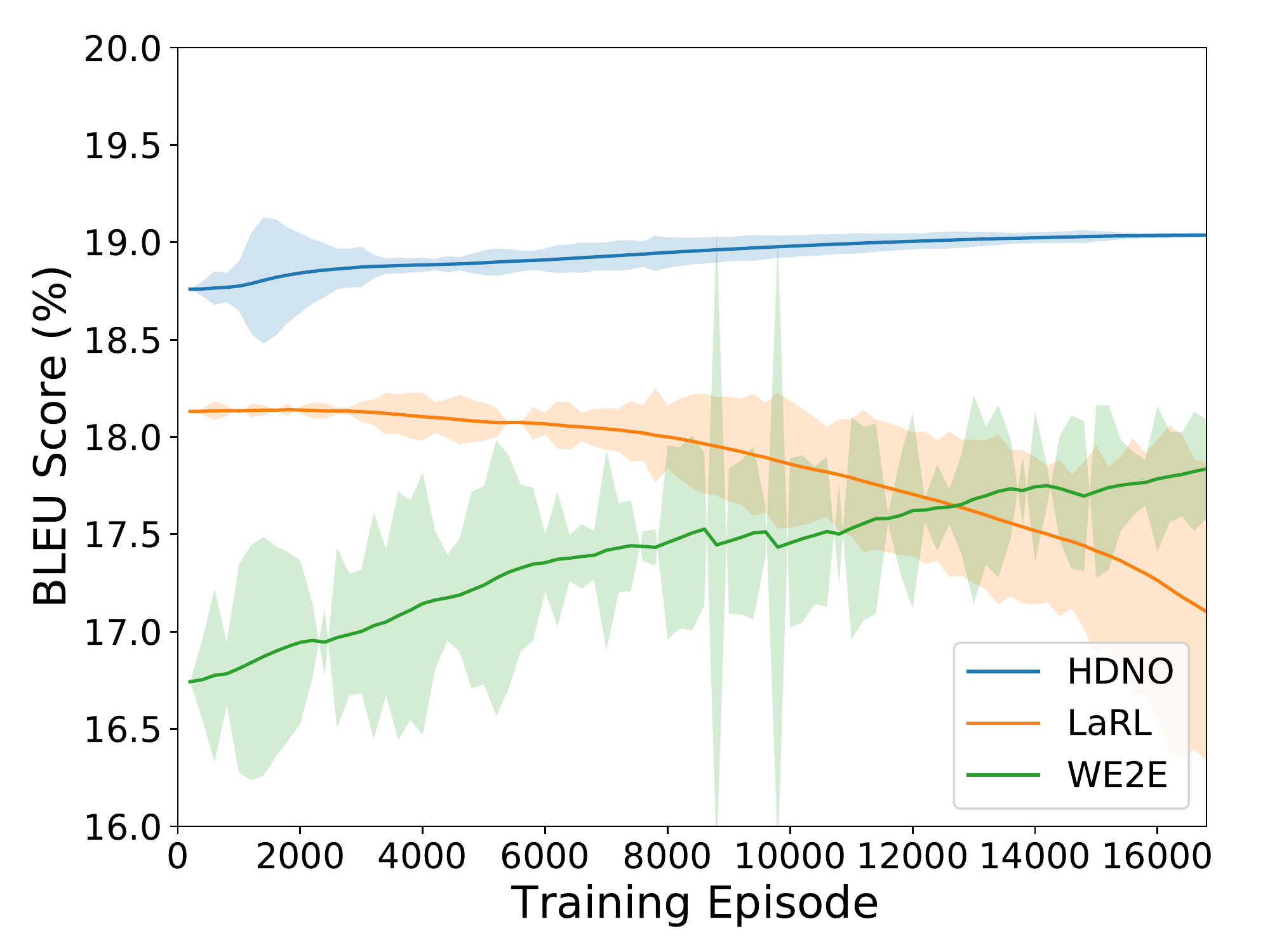}
                    \caption{BLEU score during training on MultiWoz 2.1.}
                \end{subfigure}
                \caption{Validation inform rate, success rate and BLEU score during training.}
                \label{fig:learning_curve}
            \end{figure*}
            
            In this section, we show the learning curves of HDNO (Async.) (i.e. abbreviated as HDNO in the rest of the appendix) and baselines on both MultiWoz 2.0 and MultiWoz 2.1. To show the performance of generalization, we only demonstrate the validation results during training. As seen from Figure \ref{fig:learning_curve}, compared with baselines HDNO can preserve the comprehensibility (i.e. BLEU) while improving success rate and inform rate faster. We only show the results for the initial 16,000 episodes for conciseness of the figure.

        \subsection{Complete Results with Beam Search}
        \label{subsec:beam_search}
            \begin{table}[ht!]
                \centering
                \caption{The table shows the results of HDNO (Async.) with beam search on MultiWoz 2.0 and MultiWoz 2.1.}
                \label{tab:beam_search_hdno_async}
                \resizebox{1.0\columnwidth}{!}{
                \begin{tabular}{lccccccccccccc}
                
                    \cmidrule[\heavyrulewidth](r){1-14}
                    
                    \multirow{2}{*}{\textbf{Reward Shapes}} & \multirow{2}{*}{\textbf{$\alpha$}}
                    & \multicolumn{4}{c}{\textbf{Beam width = 1}} & \multicolumn{4}{c}{\textbf{Beam width = 2}} & \multicolumn{4}{c}{\textbf{Beam width = 5}} \\
                    \cmidrule[\heavyrulewidth](r){3-6} \cmidrule[\heavyrulewidth](r){7-10} \cmidrule[\heavyrulewidth](r){11-14}
                    & &  \textbf{Inform(\%)} & \textbf{Success(\%)} & \textbf{BLEU(\%)} & \textbf{Total} & \textbf{Inform(\%)} & \textbf{Success(\%)} & \textbf{BLEU(\%)} & \textbf{Total} & \textbf{Inform(\%)} & \textbf{Success(\%)} & \textbf{BLEU(\%)} & \textbf{Total} \\
                    \cmidrule[\heavyrulewidth](r){1-2}
                    \cmidrule[\heavyrulewidth](r){3-6} \cmidrule[\heavyrulewidth](r){7-10} \cmidrule[\heavyrulewidth](r){11-14}
                    \multirow{4}{0.27\columnwidth}{MultiWoz 2.0 \\ (Success + Discriminator)} 
                    & 0.0001 & 96.70 & 84.50 & 18.72 & 109.32 & 96.40 & 84.70 & 18.85 & 109.40 & 96.30 & 84.30 & 18.70 & 109.00 \\
                    & 0.0005 & 96.30 & 83.60 & 18.60 & 108.55 & 96.40 & 84.10 & 18.30 & 108.61 & 96.30 & 84.90 & 18.50 & 109.10 \\
                    & 0.001 & 95.50 & 83.90 & 19.29 & 108.99 & 96.20 & 84.20 & 19.04 & 109.24 & 95.70 & 84.40 & 18.80 & 108.85 \\
                    & 0.005 & 97.00 & 84.10 & 18.72 & 109.27 & 96.50 & 84.30 & 18.78 & 109.18 & 96.80 & 83.90 & 18.70 & 109.05 \\
                    \cmidrule[\heavyrulewidth](r){1-14}
    
                    \multirow{2}{0.27\columnwidth}{MultiWoz 2.0 \\ (Success)} 
                    & \multirow{2}{*}{-} & \multirow{2}{*}{95.70} & \multirow{2}{*}{82.00} & \multirow{2}{*}{18.83} & \multirow{2}{*}{107.68} & \multirow{2}{*}{96.10} & \multirow{2}{*}{84.20} & \multirow{2}{*}{18.51} & \multirow{2}{*}{108.66} & \multirow{2}{*}{95.30} & \multirow{2}{*}{82.70} & \multirow{2}{*}{18.57} & \multirow{2}{*}{107.57} \\
                    & & & & & & & & & & & & & \\
                    \cmidrule[\heavyrulewidth](r){1-14}
                    
                    \multirow{2}{0.27\columnwidth}{MultiWoz 2.0 \\ (Success + BLEU)}                               
                    & \multirow{2}{*}{-} & \multirow{2}{*}{94.90} & \multirow{2}{*}{81.20} & \multirow{2}{*}{19.19} & \multirow{2}{*}{107.24} & \multirow{2}{*}{95.60} & \multirow{2}{*}{83.30} & \multirow{2}{*}{18.99} & \multirow{2}{*}{108.44} & \multirow{2}{*}{95.70} & \multirow{2}{*}{83.20} & \multirow{2}{*}{18.78} & \multirow{2}{*}{108.23} \\
                    & & & & & & & & & & & & & \\
                    \cmidrule[\heavyrulewidth](r){1-14}                      
                    
                    \multirow{2}{0.27\columnwidth}{MultiWoz 2.1 \\ (Success + Discriminator)}                     
                    & \multirow{2}{*}{0.01} & \multirow{2}{*}{92.50} & \multirow{2}{*}{82.50} & \multirow{2}{*}{19.16} & \multirow{2}{*}{106.66} & \multirow{2}{*}{92.60} & \multirow{2}{*}{80.80} & \multirow{2}{*}{19.09} & \multirow{2}{*}{105.79} & \multirow{2}{*}{92.80} & \multirow{2}{*}{83.00} & \multirow{2}{*}{18.97} & \multirow{2}{*}{106.87} \\
                    & & & & & & & & & & & & & \\
                    \cmidrule[\heavyrulewidth](r){1-14}
    
                \end{tabular}
                }
            \end{table}
            
            \begin{table}[ht!]
                \centering
                \caption{The table shows the pretraining results with beam search on MultiWoz 2.0 and MultiWoz 2.1.}
                \label{tab:pretrain}
                \resizebox{0.95\columnwidth}{!}{
                \begin{tabular}{lcccccccc}
                    \cmidrule[\heavyrulewidth](r){1-9}
                    \multirow{2}{*}{} &
                    \multicolumn{4}{c}{\textbf{MultiWoz 2.0}} &
                    \multicolumn{4}{c}{\textbf{MultiWoz 2.1}} \\
                    \cmidrule[\heavyrulewidth](r){2-5} \cmidrule[\heavyrulewidth](r){6-9}
                    & \textbf{Inform(\%)} & \textbf{Success(\%)} & \textbf{BLEU(\%)} & \textbf{Total} & \textbf{Inform(\%)} & \textbf{Success(\%)} & \textbf{BLEU(\%)} & \textbf{Total} \\
                    \cmidrule[\heavyrulewidth](r){2-5} \cmidrule[\heavyrulewidth](r){6-9}
                    Beam=1 & 69.50 & 62.00 & 19.10 & 84.85 & 70.70 & 61.40 & 18.24 & 84.29 \\
                    Beam=2 & 67.90 & 60.20 & 19.00 & 83.05 & 71.50 & 62.20 & 19.28 & 86.13 \\
                    Beam=5 & 69.50 & 62.50 & 19.20 & 85.20 & 71.40 & 62.80 & 19.12 & 86.22 \\
                    \cmidrule[\heavyrulewidth](r){1-9}
                \end{tabular}
                }
            \end{table}
            
            \begin{table}[ht!]
                \centering
                \caption{The table shows the results of HDNO (Sync.) with beam search on MultiWoz 2.0 and MultiWoz 2.1. The reward shape is the one we proposed in Section \ref{subsec:reward_form}, with $\mathit{\alpha}=0.0001$ for MultiWoz 2.0 and $\mathit{\alpha}=0.01$ for MultiWoz 2.1 respectively.}
                \label{tab:beam_search_hdno_sync}
                \resizebox{0.95\columnwidth}{!}{
                \begin{tabular}{lcccccccc}
                    \cmidrule[\heavyrulewidth](r){1-9}
                    \multirow{2}{*}{} &
                    \multicolumn{4}{c}{\textbf{MultiWoz 2.0}} &
                    \multicolumn{4}{c}{\textbf{MultiWoz 2.1}} \\
                    \cmidrule[\heavyrulewidth](r){2-5} \cmidrule[\heavyrulewidth](r){6-9}
                    & \textbf{Inform(\%)} & \textbf{Success(\%)} & \textbf{BLEU(\%)} & \textbf{Total} & \textbf{Inform(\%)} & \textbf{Success(\%)} & \textbf{BLEU(\%)} & \textbf{Total} \\
                    \cmidrule[\heavyrulewidth](r){2-5} \cmidrule[\heavyrulewidth](r){6-9}
                    Beam=1 & 79.70 & 70.90 & 20.16 & 95.46 & 82.70 & 69.60 & 18.92 & 95.07 \\
                    Beam=2 & 79.60 & 70.80 & 19.71 & 94.91 & 81.40 & 69.40 & 19.17 & 94.57 \\
                    Beam=5 & 83.20 & 73.50 & 19.82 & 98.17 & 83.10 & 70.80 & 18.81 & 95.76 \\
                    \cmidrule[\heavyrulewidth](r){1-9}
                \end{tabular}
                }
            \end{table}
            
            In this section, we show the complete results of HDNO (Async.) and HDNO (Sync.) as well as the pretraining results with beam search in Table \ref{tab:beam_search_hdno_async}, \ref{tab:beam_search_hdno_sync} and \ref{tab:pretrain} respectively, where beam width is selected from 1, 2 and 5. Apparently, the proposed HRL algorithm gives an enormous improvement on the performance in comparison with that of the pretraining. Nevertheless, the pretraining is an essential part that cannot be replaced before training with reinforcement learning in the task-oriented dialogue system. The possibility of training from scratch in HRL with a good performance for the task-oriented dialogue system is left to be investigated.

        \subsection{Complete Benchmark Results on MultiWoz 2.0}
        \label{subsec:benchmark}
            \begin{table}[ht!]
                \centering
                \caption{The table shows the full benchmark results on MultiWoz 2.0, compared with HDNO.}
                \label{tab:benchmark_online}
                \resizebox{0.9\columnwidth}{!}{
                \begin{tabular}{lcccc}
                    \toprule
                    & \textbf{Inform(\%)} & \textbf{Success(\%)} & \textbf{BLEU(\%)} & \textbf{Total} \\
                    \midrule[\heavyrulewidth]
                    HDNO (Our Model) & \textbf{96.40} & \textbf{84.70} & 18.85 & \textbf{109.4} \\
                    \midrule[\heavyrulewidth]
                    TokenMoE * \citep{pei2019modular} & 75.30 & 59.70 & 16.81 & 84.31 \\
                    Baseline * \citep{BudzianowskiWTC18} & 71.29 & 60.96 & 18.80 & 84.93 \\
                    Structured Fusion * \citep{mehri2019structured} & 82.70 & 72.10 & 16.34 & 93.74 \\
                    LaRL * \citep{zhao2019rethinking} & 82.80 & 79.20 & 12.80 & 93.80 \\
                    SimpleTOD \citep{hosseini2020simple} & 88.90 & 67.10 & 16.90 & 94.90 \\
                    MoGNet \citep{pei2019retrospective} & 85.30 & 73.30 & 20.13 & 99.43 \\
                    HDSA * \citep{ChenCQYW19} & 82.90 & 68.90 & \textbf{23.60} & 99.50 \\
                    ARDM \citep{wu2019alternating} & 87.40 & 72.80 & 20.60 & 100.70 \\
                    DAMD \citep{zhang2019task} & 89.20 & 77.90 & 18.60 & 102.15 \\
                    SOLOIST \citep{peng2020soloist} & 89.60 & 79.30 & 18.30 & 102.75 \\
                    MarCo \citep{wang2020multi} & 92.30 & 78.60 & 20.02 & 105.47 \\
                    \bottomrule
                \end{tabular}
                }
            \end{table}
            
            In this section, we show the complete state-of-the-art results on MultiWoz 2.0 for the policy optimization task, from the official records \footnote{\url{https://github.com/budzianowski/multiwoz}.}. All these results are collected from their original papers. As Table \ref{tab:benchmark_online} shows, HDNO leads the board on the total performance evaluated by $0.5 \times (\text{Inform} + \text{Success}) + \text{BLEU}$, where each metric is measured in percentage. However, due to the update of official evaluator this year, the results marked with * were probably underestimated.
        
        \subsection{Examples of Generated Delexicalized Dialogues}
        \label{subsec:samples}
            In this section, we demonstrate some system utterances generated by the baselines and HDNO. Since most of the dialogues in MultiWoz 2.0 and MultiWoz 2.1 are similar, we only show the results on MultiWoz 2.0. As we can see from Table \ref{tab:examples_1} and \ref{tab:examples_2}, compared with the baselines trained with SL (i.e. HDSA), the performance of HDNO on fulfilling a user's request is actually better, however, the generated utterances of HDSA could be more fluent and comprehensible. In comparison with other baselines trained with RL, the generated utterances of HDNO is apparently more fluent and comprehensible. Especially, WE2E tends to generate as many slots as possible so as to increase the success rate, regardless of the comprehensibility of generated utterances. This is the common issue of most RL methods on task-oriented dialogue system as we stated in Section 3 in the main part of paper. 
            
            \begin{table}[ht!]
                \centering
                \caption{The table shows some delexicalized system utterances generated by the baselines and HDNO on MultiWoz 2.0.}
                \label{tab:examples_1}
                \resizebox{0.95\columnwidth}{!}{
                \begin{tabular}{lp{1.2\columnwidth}p{1.2\columnwidth}}
\toprule
       \textbf{Models}   & \textbf{Delexicalized Dialogue} \\    
       
\midrule[\heavyrulewidth]  
\multirow{10}{*}{HDNO (Our Model)}  & \textit{usr}: i need train reservations from {[}value\_place{]} to {[}value\_place{]}                                                                                                                                              \\
                        & \textit{sys}: what day would you like to travel ?                                                                                                                       \\
                        & \textit{usr}: i would like to leave on {[}value\_day{]} and arrive by {[}value\_time{]}                                                                                                     \\
                        & \textit{sys}: {[}train\_id{]} leaves at {[}value\_time{]} and arrives at {[}value\_time{]} . would you like me to book that for you ?                                                                                                                                                     \\
                        & \textit{usr}: before booking , i would also like to know the travel time , price , and departure time please .                                                                                                                                                                               \\
                        & \textit{sys}: the travel time is {[}value\_count{]} minutes . the price is {[}value\_price{]} pounds . is there anything else i can help you with ?                                                                                                                                          \\
                        & \textit{usr}: no hold off on booking for now . can you help me find an attraction called {[}attraction\_name{]} ?                                                                                                                                 \\
                        & \textit{sys}: {[}attraction\_name{]} is a museum in the {[}value\_area{]} . would you like their location ?                                                                                                                                                                                \\
                        & \textit{usr}: yes , that was all i needed . thank you very much !                                                                                                                    \\
                        & \textit{sys}: you are welcome . have a nice day . goodbye .                                                                                                                \\
\midrule[\heavyrulewidth]
\multirow{10}{*}{WE2E}  & \textit{usr}: i need train reservations from {[}value\_place{]} to {[}value\_place{]}                                                                                                 \\
                        & \textit{sys}: i have {[}value\_count{]} trains leaving {[}value\_place{]} on {[}value\_day{]} . what day and time would you like to travel ?                                                                                                                                          \\
                        & \textit{usr}: i would like to leave on {[}value\_day{]} and arrive by {[}value\_time{]}                                                                                        \\
                        & \textit{sys}: the {[}train\_id{]} arrives at {[}value\_time{]} . the price is {[}value\_price{]} pounds . the train id is {[}train\_id{]} . the price is {[}value\_price{]} pounds . the train id is {[}train\_id{]} . the train id is {[}train\_id{]} . the train id is {[}train\_id{]} ... 
                        \\
                        & \textit{usr}: before booking , i would also like to know the travel time , price , and departure time please .                                                                                                                                                                               \\
                        & \textit{sys}: the train id is {[}train\_id{]} and the price is {[}value\_price{]} pounds . the train id is {[}train\_id{]} . the price is  {[}value\_price{]} pounds . the train id is {[}train\_id{]} . the train id is {[}train\_id{]} . the train id is {[}train\_id{]} ...                                                                   \\
                        & \textit{usr}: no hold off on booking for now . can you help me find an attraction called {[}attraction\_name{]} ?                                                                                                                                 \\
                        & \textit{sys}: {[}attraction\_name{]} is a nightclub located in the {[}value\_area{]} of town . the address is {[}attraction\_address{]} . the phone number is {[}attraction\_phone{]} . the entrance fee is {[}value\_count{]} pounds . the postcode is {[}attraction\_postcode{]} ...      
                                                                                \\
                        & \textit{usr}: yes , that was all i needed . thank you very much !                                                                                          \\
                        & \textit{sys}: you are welcome . have a great day . goodbye .                                                                                                \\
\midrule[\heavyrulewidth]
\multirow{10}{*}{LaRL}  & \textit{usr}: i need train reservations from {[}value\_place{]} to {[}value\_place{]}                                                                                               \\
                        & \textit{sys}: i have a train leaving at {[}value\_time{]} . would you like me to book that for you ?                                                                                                                                                           \\
                        & \textit{usr}: i would like to leave on {[}value\_day{]} and arrive by {[}value\_time{]}                                                                        \\
                        & \textit{sys}: i have a train leaving at {[}value\_time{]} . would you like me to book that for you ?                                                                                                                                                            \\
                        & \textit{usr}: before booking , i would also like to know the travel time , price , and departure time please .                                                                                                                                                       \\
                        & \textit{sys}: the {[}train\_id{]} leaves at {[}value\_time{]} and arrives at {[}value\_time{]} . the travel time is {[}value\_count{]} minutes.                                                                                                                       \\
                        & \textit{usr}: no hold off on booking for now . can you help me find an attraction called {[}attraction\_name{]} ?                                                                                                                                             \\
                        & \textit{sys}: {[}attraction\_name{]} is located in the {[}value\_area{]} of town . the address is {[}attraction\_address{]} and the postcode is {[}attraction\_postcode{]} .                                                                                         \\
                        & \textit{usr}: yes , that was all i needed . thank you very much !                                                                                                                                                                                                     \\
                        & \textit{sys}: you are welcome . have a great day .                                                                                                                                                                                                                    \\
\midrule[\heavyrulewidth] 
\multirow{10}{*}{HDSA}  & \textit{usr}: i need train reservations from {[}value\_place{]} to {[}value\_place{]}                                                                                                       \\
                        & \textit{sys}: what day and time would you like to travel ?                                                                                                          \\
                        & \textit{usr}: i would like to leave on {[}value\_day{]} and arrive by {[}value\_time{]}                                                                             \\
                        & \textit{sys}: what time would you like to leave ?                                                                                                                   \\
                        & \textit{usr}: before booking , i would also like to know the travel time , price , and departure time please .                                                                                                                                                          \\
                        & \textit{sys}: the train leaves at {[}train\_leaveat{]} and arrives by {[}train\_arriveby{]} . the price is {[}train\_price{]} pounds and the travel time is {[}value\_count{]} minutes .                                                                             \\
                        & \textit{usr}: no hold off on booking for now . can you help me find an attraction called {[}attraction\_name{]} ?                                                                                                                                                \\
                        & \textit{sys}: it is located in the {[}attraction\_area{]} of town at {[}attraction\_address{]} . their phone number is {[}attraction\_phone{]} . is there anything else i can help you with ?                                                                            \\
                        & \textit{usr}: yes , that was all i needed . thank you very much !                                                                                                                                                                                                       \\
                        & \textit{sys}: you are welcome . have a great day !                                                                                                                                                                                                                      \\
\midrule[\heavyrulewidth]
\multirow{10}{*}{Human} & \textit{usr}: i need train reservations from {[}value\_place{]} to {[}value\_place{]}                                                                                                         \\
                        & \textit{sys}: i have {[}value\_count{]} trains matching your request . is there a specific day and time you would like to travel ?                                                                                                                                     \\
                        & \textit{usr}: i would like to leave on {[}value\_day{]} and arrive by {[}value\_time{]}                                                                                                                                                                                 \\
                        & \textit{sys}: there are {[}value\_count{]} trains for the day and time you request . would you like to book it now ?                                                                                                                                                   \\
                        & \textit{usr}: before booking , i would also like to know the travel time , price , and departure time please .                                                                                                                                                           \\
                        & \textit{sys}: there are {[}value\_count{]} trains meeting your needs with the first leaving at {[}value\_time{]} and the last {[}value\_count{]} leaving at {[}value\_time{]} . do you want to book {[}value\_count{]} of these ?                                    
                                                                                                \\
                        & \textit{usr}: no hold off on booking for now . can you help me find an attraction called {[}attraction\_name{]} ?                                                                                                                                                 \\
                        & \textit{sys}: yes it is a cinema located in the {[}value\_area{]} part of town what information would you like on it ?                                                                                                                                                   \\
                        & \textit{usr}: thank you for using our system .                                                                                                                         \\
                        & \textit{sys}: you are welcome . have a nice day . goodbye .                                                                                                            \\                                                                                                                                              
\bottomrule
                \end{tabular}
                }
    \end{table}
            
           \begin{table}[ht!]
                \centering
                \caption{The table shows some delexicalized system utterances generated by the baselines and HDNO on MultiWoz 2.0.}
                \label{tab:examples_2}
                \resizebox{0.95\columnwidth}{!}{
                \begin{tabular}{lp{1.2\columnwidth}p{1.2\columnwidth}}
\toprule
       \textbf{Models}   & \textbf{Delexicalized Dialogue}  \\    
       
 \midrule[\heavyrulewidth]   
 \multirow{8}{*}{HDNO (Our Model)}  & \textit{usr}: please help me make a taxi reservation                                                                                                                      \\
                        & \textit{sys}: what time would you like to leave ?                                                                                                                         \\
                        & \textit{usr}: it does not matter , just so i arrive by {[}value\_time{]} .                                                                                                \\
                        & \textit{sys}: what is your destination ?                                                                                                                                  \\
                        & \textit{usr}: i want to go to the {[}restaurant\_name{]} . and i need the car type and contact number please .                                                            
                                                                                                    \\
                        & \textit{sys}: booking completed ! booked car type {[}taxi\_type{]} contact number {[}taxi\_phone{]} is there anything else i can help you with ?                          
                                                                                                    \\
                        & \textit{usr}: no that is all . thanks                                                                                                                                      \\
                        & \textit{sys}: you are welcome . have a nice day . goodbye .                                                                                                                \\
\midrule[\heavyrulewidth]
\multirow{8}{*}{WE2E}  & \textit{usr}: please help me make a taxi reservation                                                                                                                                                \\
                        & \textit{sys}: i can help with that . what time do you need to leave or arrive by ?                                                                                          \\
                        & \textit{usr}: it does not matter , just so i arrive by {[}value\_time{]} .                                                                                                   \\
                        & \textit{sys}: i would be happy to help with your request , but i will need to know what time you are looking for .                                                        
                                                                                                    \\
                        & \textit{usr}: i want to go to the {[}restaurant\_name{]} . and i need the car type and contact number please .                                                            
                                                                                                    \\
                        & \textit{sys}: i have booked a {[}taxi\_type{]} for you . the contact number is {[}taxi\_phone{]} . is there anything else i can help you with today ?                     
                                                                                                    \\
                        & \textit{usr}: no that is all . thanks                                                                                                                                       \\
                        & \textit{sys}: you are welcome . have a great day . goodbye .                                                                                                                \\
\midrule[\heavyrulewidth] 
\multirow{8}{*}{LaRL}  & \textit{usr}: please help me make a taxi reservation                                                                                                                                                \\
                        & \textit{sys}: i have booked a {[}taxi\_type{]} for you . the contact number is {[}taxi\_phone{]} .                                                                        
                                                                                                    \\
                        & \textit{usr}: it does not matter , just so i arrive by {[}value\_time{]} .                                                                                                \\
                        & \textit{sys}: i have booked a {[}taxi\_type{]} for you . the contact number is {[}taxi\_phone{]} .                                                                       
                                                                                                    \\
                        & \textit{usr}: i want to go to the {[}restaurant\_name{]} . and i need the car type and contact number please .                                                            
                                                                                                    \\
                        & \textit{sys}: i have booked a {[}taxi\_type{]} for you . the contact number is {[}taxi\_phone{]} .                                                                        
                                                                                                    \\
                        & \textit{usr}: no that is all . thanks                                                                                                                                     \\
                        & \textit{sys}: you are welcome . goodbye .                                                                                                                                 \\
\midrule[\heavyrulewidth] 
\multirow{8}{*}{HDSA}  & \textit{usr}: please help me make a taxi reservation                                                                                                                                                \\
                        & \textit{sys}: what time would you like to leave ?                                                                                                                         \\
                        & \textit{usr}: it does not matter , just so i arrive by {[}value\_time{]} .                                                                                                \\
                        & \textit{sys}: where will you be departing from and what is your destination ?                                                                                             \\
                        & \textit{usr}: i want to go to the {[}restaurant\_name{]} . and i need the car type and contact number please .                                                            
                                                                                                    \\
                        & \textit{sys}: your taxi is booked . the car type is a {[}taxi\_type{]} and the contact number is {[}taxi\_phone{]} . is there anything         else i can help you with ? 
                                                                                                    \\
                        & \textit{usr}: no that is all . thanks                                                                                                                                     \\
                        & \textit{sys}: you are welcome . have a great day !                                                                                                                        \\
\midrule[\heavyrulewidth] 
\multirow{8}{*}{Human} & \textit{usr}: please help me make a taxi reservation                                                                                                                                                 \\
                        & \textit{sys}: i can help you with that . when do you need to leave ?                                                                                                        \\
                        & \textit{usr}: it does not matter , just so i arrive by {[}value\_time{]} .                                                                                                  \\
                        & \textit{sys}: i can help you with that . where are you going ?                                                                                                               \\
                        & \textit{usr}: i want to go to the {[}restaurant\_name{]} . and i need the car type and contact number please .                                                            
                                                                                                    \\
                        & \textit{sys}: okay , i have a {[}taxi\_type{]} for you with the contact number {[}taxi\_phone{]} . is there anything else you need today ?                               
                                                                                                    \\
                        & \textit{usr}:  no that is all . thanks                                                                                                                                       \\
                        & \textit{sys}: you are welcome , have a wonderful time !                                                                                                                      \\
\bottomrule
                \end{tabular}
                }
    \end{table}
        
    \section{Extra Experimental Setups}
    \label{sec: extra experimental setups}
        \subsection{Training Details}
            During pretraining, natural language generator (NLG) is trained via forcing the prediction at each time step to match an oracle word given a preceding oracle word as input. Nevertheless, during hierarchical reinforcement learning (HRL), NLG updates the estimated distribution at each time step by $\nabla \mathit{J}(\pi) = \mathbb{E}_{\pi} \mathlarger{[} \ \sum_{i=t}^{T} \gamma^{i-t} r_{i} \nabla \ln \pi(\vw_{t}|\vs_{t}) \ \mathlarger{]}$ given a word sampled from a preceding predicted distribution. As for sampling a dialogue act from dialogue policy, we leverage reparameterization trick during pretraining. On the other hand, the discriminator is firstly pretrained and fixed as a reward during HRL. 
            
            We find that it is useful to train both of the discriminator and HDNO simultaneously during pretraining. Specifically, in addition to training on oracle system utterances, we also use the generated utterances to train the discriminator, which can improve its performance in experiments. The optimization problem is expressed as Eq.\ref{eq:disc_train}, where $\mathit{D}(\cdot | \cdot, \theta)$ is the discriminator defined in Definition 2 in the main part of paper, parameterized with $\theta$; and $\mathit{\eta}$ is a multiplier to control the contribution of the generated utterances $(\hat{\vw}_{t})_{t=0}^{T'}$ given an initial state. We hypothesize that the generated utterances could expand the original small scale corpus (i.e. oracle system utterances), so as to improve the generalization of the discriminator. Since we have not grasped the intrinsic reason of this method, it is only regarded as a trick for training the discriminator in our work.
            \begin{equation}
                \max_{\theta} \sum_{t=0}^{T} \log D(\vw_{t} | \vw_{t-1}, \theta) + \eta \sum_{t=0}^{T'} \log D(\hat{\vw}_{t} | \hat{\vw}_{t-1}, \theta)
                \label{eq:disc_train}
            \end{equation}
            
            During HRL, $\mathit{r}^{\text{succ}}$ and $\mathit{r}^{\text{disc}}$ are respectively normalized by z-score normalization \citep{patro2015normalization} to adaptively control their impacts in the total reward. For instance, when either $\mathit{r}^{\text{succ}}$ or $\mathit{r}^{\text{disc}}$ converges around some value, its normalized value will be close to zero and the total reward will be biased to the other. Therefore, this can further mitigate the conflicts between improving both the success rate and preserving the comprehensibility as stated in Section 3 in the main part of paper. 
            
            Furthermore, the experiment of HDNO was run on a Nvidia GeForce RTX 2080Ti graphic card, and it consumes around 3 hours for SL and 2 hours for RL. Therefore, it is not very expensive to reproduce our results.
            
        \subsection{Hyperparameters}
            The specific hyperparameters for training HDNO on MultiWoz 2.0 and MultiWoz 2.1 are shown in Table \ref{tab:hyperparameters}. During training, we used Adam optimizer \citep{KingmaB14} for pretraining and stochastic gradient descent (SGD) for RL. To ease life, we explicitly label the hyperparameter if it is only for pretraining or RL.
        
             \begin{table*}[ht!]
                \centering
                \caption{\small{Hyperparameters for training on MultiWoz 2.0 and MultiWoz 2.1.}}
                \label{tab:hyperparameters}
                \resizebox{0.95\columnwidth}{!}{
                    \begin{tabular}{lccl}
                        \toprule
                        \textbf{Hyperparameters}   & \textbf{MultiWoz 2.0}  & \textbf{MultiWoz 2.1} & \textbf{Description} \\   
                        \midrule
                        max\_utt\_len & 50 & 50 & The maximum length of a user's utterances.  \\
                        max\_dec\_len & 50 & 50 & The maximum length of system's utterances in a response.  \\
                        batch\_size (Pretrain) & 32    & 32   & The number of samples for each update during SL. \\ 
                        learning rate (Pretrain) & 1e-3 & 1e-3 &  The learning rate for SL. \\
                        grad\_clip (Pretrain) & 1.0 & 5.0 & The maximum total norm of gradients is allowed during SL. \\
                        dropout (Pretrain) & 0.5 & 0.5 & The randomness coefficient of dropout. \\ 
                        num\_epoch (Pretrain) & 50 & 50 & The number of training epochs for SL. \\
                        embed\_size & 100 & 100 & The size of a word embedding vector. \\
                        utt\_cell\_size & 300 & 300 & The size of hidden layer for utterance encoder. \\
                        dec\_cell\_size & 300 & 300 & The size of hidden layer for natural language generator. \\     
                        y\_size & 200 & 200 & The size of a latent dialogue act. \\
                        beta (Pretrain) &  1e-2 & 1e-3 & The regularization coefficient of KL divergence for dialogue policy during SL. \\
                        eta (Pretrain) & 0.1 & 0.1 & The coefficient to control the contribution of randomly generated utterances. \\
                        high-level learning rate (RL) & 9e-3 &  1e-2 & The learning rate for high-level policy during RL. \\
                        low-level learning rate (RL) & 9e-3 & 1e-2 & The learning rate for low-level policy during RL. \\
                        num\_epoch (RL) & 1 & 2 & The number of training epochs for RL. \\   
                        temperature (RL) & 0.1 & 0.1 & The temperature of a Gibbs distribution for each word during RL.\\
                        gamma (RL) & 0.99 & 0.99 & The discount\_factor for the reward of success rate during RL. \\
                        gamma\_nll (RL) & 0.99 & 0.99 & The discount\_factor for the reward of nll generated from discriminator during RL. \\
                        grad\_clip (RL) & 0.85 & 0.95 & The maximum total norm of gradients is allowed during RL.\\
                        alpha (RL) & 1e-4 & 1e-2 & The multiplier to adjust the balance between the discriminator and  \\
                        & & & the success rate in the total reward during RL. \\
                        disc (Pretrain) & true & true & Pretraining discriminator. \\
                        gen\_guide (Pretrain) & true & true & Using generated samples for pretraining discriminator. \\
                        reg (Pretrain) & kl & kl & Applying KL-divergence during pretraining. \\
                        high\_freq (RL) & 1 & 1 & The frequency for updating high-level policy. \\
                        low\_freq (RL) & 1 & 1 & The frequency for updating low-level policy. \\
                        synchron (RL) & false & false & Applying asynchronous updates between high-level and low-level policy during RL. \\
                        disc2reward (RL) & true & true & Applying discriminator as a reward. \\
                        success2reward (RL) & true & true & Applying success rate as a reward. \\
                        nll\_normalize (RL) & true & true & Applying z-score normalization for nll generated from discriminator during RL. \\
                        \bottomrule
                        \end{tabular}
                    }
            \end{table*}
    
    \section{Extra Background}
    \label{sec:extra background}

        \subsection{Reinforcement Learning and Policy Gradient Method}
        \label{subsec: rl and pg}
            MDP \citep{bellman1957markovian} is a discrete time stochastic control process that an agent decides an action $\va_{t} \in \sA$ at each state $\vs_{t} \in S$ emitted from an environment via a probabilistic transition function $\mathit{p}(\vs_{t+1}|\vs_{t}, \va_{t}) \mapsto [0, 1]$. The environment also gives a reward $\mathit{r}(\vs_{t}, \va_{t}, \vs_{t+1}) \mapsto \mathbb{R}$ (abbreviated as $\mathit{r}_{t}$ for simplicity) to measure the performance of an action at each time step. RL \citep{sutton2018reinforcement} is a learning paradigm which aims to find an optimal policy $\mathit{\pi}(\va_{t}|\vs_{t}) \mapsto [0, 1]$ to deal with MDP by maximizing the expectation over cumulative long-term rewards. Mathematically, it can be expressed as $\max_{\pi} \mathbb{E}_{\pi}[ \ \sum_{t=0}^{\infty} \gamma^{t} \ r_{t} \ ]$, where $\mathit{\gamma} \in (0, 1)$ is a discount factor. Different from value-based methods, the policy gradient method derives a stochastic policy directly by optimizing a performance function w.r.t. parameters of the policy \citep{sutton2018reinforcement}. However, since the performance function cannot be differentiated w.r.t parameters of the policy, the policy gradient is derived by a natural gradient such that $\nabla_{\theta} \mathit{J}(\theta) = \mathbb{E}_{\pi}[ \ Q(\vs_{t}, \va_{t}) \nabla_{\theta} \ln \pi_{\theta}(\va_{t}|\vs_{t}) \ ]$. REINFORCE \citep{williams1992simple} is a policy gradient method that evaluates $\mathit{Q}(\vs_{t}, \va_{t})$ by a return $\mathit{G}_{t} = \sum_{t=0}^{\infty} \gamma^{t} \ r_{t}$. To deal with a continuous action space, $\mathit{\pi}_{\theta}(\va_{t}|\vs_{t})$ can be represented as a Gaussian distribution \citep{sutton2018reinforcement}, where the mean and scale are both parameterized with $\mathit{\theta}$.
    
    \section{Dialogue Act represented as Ontology}
    \label{sec:dialogue_act_ontology}
        \begin{table*}[ht!]
            \centering
            \caption{\small{Dialogue act ontology.}}
            \label{tab:dialogue_act}
            \resizebox{0.95\columnwidth}{!}{
                \begin{tabular}{p{0.1\columnwidth}p{0.6\columnwidth}}
                    \toprule
                    \textbf{Dialogue Act Type} & inform / request / select / recommend / not found /
                    request booking info / offer booking / inform booked / decline booking /
                    welcome / greet / bye / reqmore \\
                    \bottomrule
                \end{tabular}
                }
        \end{table*}
        
        In this section, we show the ontology for representing a handcrafted dialogue act \citep{BudzianowskiWTC18} in Table \ref{tab:dialogue_act}. The semantic meanings of latent dialogue acts analyzed in Section 5.2.3 in the main part of paper are correlated to these dialogue act types. This is the reason why we conclude that learning latent dialogue acts can potentially substitute for handcrafted dialogue acts with ontology.
            
    \section{Extra Discussion}
    \label{sec:extra_discussion}
        \textbf{Discussion on Reinforcement Learning for Task-oriented Dialogue System:} As we stated in Section 3 in the main part of paper, the primary issue of using reinforcement learning (RL) in task-oriented dialogue system is that the improvement on fulfilling user requests and comprehensibility of generated system utterances is not simple to be balanced. One reason could be that the reward is only set up to improve the success rate and during learning the aspect of comprehensibility may be easily ignored. This is the reason why we consider extra reward criteria in our work to mitigate this predicament. Another reason could be that the state space and action space are so large in an end-to-end (E2E) model so that learning a mapping between these two spaces becomes difficult, as stated in Section 1 in the main part of paper. To deal with it, we propose to decouple dialogue policy and natural language generator (NLG) of an E2E model into two separate modules as those in the traditional architecture during learning, as well as model them with the option framework \citep{sutton1999between}. As a result, the complexity of mapping from context to system utterances is reduced from $\mathit{V}^2$ to $\mathit{(L+M)V+ML}$, where $\mathit{V}$ is the vocabulary size, $\mathit{L} \ll \mathit{V}$ is the space size of latent dialogue acts and $\mathit{M} \ll \mathit{V}$ is the space size of encoded utterances.
    

\end{document}